\begin{document}

\title{\texorpdfstring{Language-Driven 6-DoF Grasp Detection \\Using Negative Prompt Guidance}{}}

\titlerunning{Language-Driven 6-DoF Grasping w. Negative Prompt Guidance}

\author{Toan Nguyen\inst{1}\orcidlink{0009-0008-0534-647X} \and
Minh Nhat Vu\inst{2,3,*}\orcidlink{0000-0003-0692-8830} \and
Baoru Huang\inst{4}\orcidlink{0000-0002-4421-652X} \and
An Vuong\inst{1}\orcidlink{0009-0003-8533-9897} \and \\
Quan Vuong\inst{5}\orcidlink{0009-0009-1829-9614} \and
Ngan Le\inst{6}\orcidlink{0000-0003-2571-0511} \and
Thieu Vo\inst{7}\orcidlink{0000-0001-7957-5648} \and
Anh Nguyen\inst{8}\orcidlink{0000-0002-1449-211X} 
}

\authorrunning{Nguyen et al.}

\institute{
FPT Software AI Center, Vietnam \and
TU Wien, Austria \and
AIT GmbH, Austria, $^*$Corresponding author \and
Imperial College London, United Kingdom \and
Physical Intelligence, United States \and
University of Arkansas, United States \and
Ton Duc Thang University, Vietnam \and
University of Liverpool, United Kingdom
}
\maketitle

\begin{abstract}
6-DoF grasp detection has been a fundamental and challenging problem in robotic vision. While previous works have focused on ensuring grasp stability, they often do not consider human intention conveyed through natural language, hindering effective collaboration between robots and users in complex 3D environments. In this paper, we present a new approach for language-driven 6-DoF grasp detection in cluttered point clouds. We first introduce Grasp-Anything-6D, a large-scale dataset for the language-driven 6-DoF grasp detection task with 1M point cloud scenes and more than 200M language-associated 3D grasp poses. We further introduce a novel diffusion model that incorporates a new negative prompt guidance learning strategy. The proposed negative prompt strategy directs the detection process toward the desired object while steering away from unwanted ones given the language input. Our method enables an end-to-end framework where humans can command the robot to grasp desired objects in a cluttered scene using natural language. Intensive experimental results show the effectiveness of our method in both benchmarking experiments and real-world scenarios, surpassing other baselines. In addition, we demonstrate the practicality of our approach in real-world robotic applications. Our project is available at \href{https://airvlab.github.io/grasp-anything/}{https://airvlab.github.io/grasp-anything}.
\keywords{Language-Driven 6-DoF Grasp Detection, Diffusion Models}
\end{abstract}

\section{Introduction} \label{Sec:Intro}
Grasp detection stands as a foundational and enduring challenge in the field of robotics and computer vision~\cite{caldera2018review,kleeberger2020survey}. This task involves identifying a suitable configuration for the robotic hand that stably grasps the objects, facilitating the effective manipulation capability in the robot's operating environment. Traditional grasp detection methods have predominantly focused on ensuring the stability of the detected grasp pose, while often neglecting the human intention. This limitation underscores a large gap between current approaches and real-world user-specified requirements~\cite{vuong2023grasp}. The integration of human intention conveyed through natural language, is therefore crucial to help robots perform complex tasks more flexibly. This enables users to communicate task specifications more intuitively and comprehensively to the intelligent robot, facilitating a more effective human-robot collaboration.

\vspace{-4ex}
\begin{figure}[h]
    \centering
    \includegraphics[width=1.0\linewidth]{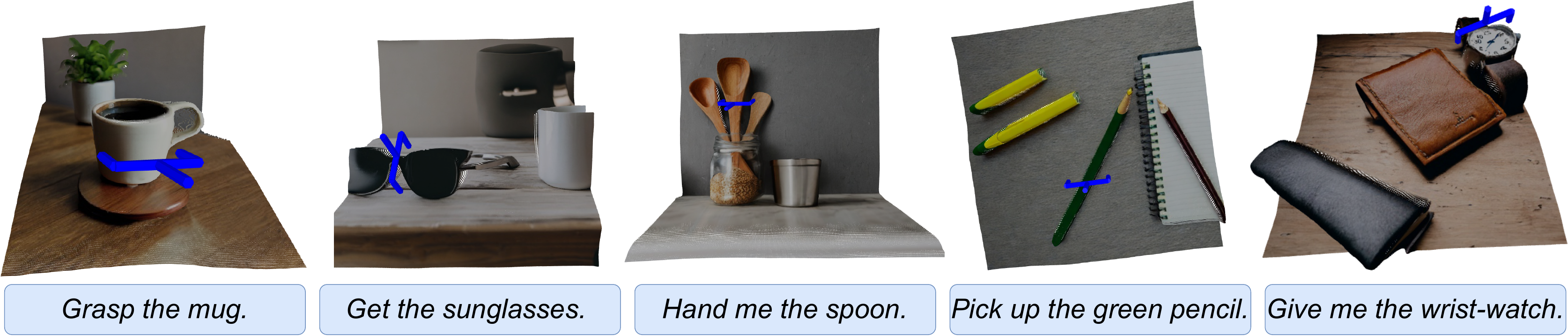}
    \caption{We tackle the task of language-driven 6-DoF grasp detection in cluttered 3D point cloud scenes.}
    \label{Fig:Intro}
\end{figure}
\vspace{-4ex}

In recent years, thanks to advancements in large language models~\cite{devlin2018bert,gpt4,chowdhery2023palm} and large vision-language models~\cite{radford2021learning,li2022blip,li2022grounded}, there has been a surge of interest in language-driven robotics research~\cite{brohan2023can,brohan2022rt,zitkovich2023rt,driess2023palm,rashid2023language,nguyen2023open,van2023open}. This research field focuses on developing intelligent robots that can understand and respond to human linguistic commands. For example, SayCan~\cite{brohan2023can} and PaLM-E~\cite{chowdhery2023palm} are robotic language models designed to provide instructions for robots operating in real-world environments. Trained on large-scale data, RT-1~\cite{brohan2022rt} and RT-2~\cite{zitkovich2023rt} are robotic systems capable of performing low-level actions in response to natural language commands. While significant progress has been made in the field, it is noteworthy that only a few works have addressed the task of language-driven grasp detection~\cite{nguyen2023language,tang2023graspgpt,vuong2023grasp,tang2023ttask,rashid2023language,tziafas2023language}. Furthermore, these methods still exhibit considerable shortcomings. Particularly, while the authors in~\cite{nguyen2023language,tang2023graspgpt} solely focus on single-object scenarios, the works in~\cite{vuong2023grasp,tang2023ttask,tziafas2023language} restrict grasp detection to 2D configurations. These limitations prevent the robot from capturing the complexity of real-world 3D and multi-object scenarios. In this research, we address these limitations by training a new system that detects language-driven 6-DoF grasp poses, with a focus on grasping objects within diverse and complex scenes represented as 3D point clouds.

We first introduce a new dataset, namely \textbf{Grasp-Anything-6D}, as a large-scale dataset for language-driven 6-DoF grasp detection in 3D point clouds. Our dataset builds upon the Grasp-Anything dataset~\cite{vuong2023grasp} and incorporates a state-of-the-art depth estimation method~\cite{bhat2023zoedepth} to support 2D to 3D projection, and manual correction to ensure the dataset quality. Specifically, Grasp-Anything-6D provides one million (1M) 3D point cloud scenes with comprehensive object grasping prompts and dense 6-DoF grasp pose annotations. With its extensive volume, our dataset enables the capability of 6-DoF grasp detection using language instructions directly from the point cloud. Empirical demonstrations show that our dataset successfully facilitates grasp detection in diverse and complex scenes, both in vision-based experiments and real-world robotic settings.

With the new dataset in hand, we propose a new diffusion model to address the challenging problem of language-driven 6-DoF grasp detection called \textbf{LGrasp6D}. We opt for diffusion models due to their recent impressive results in various generation tasks~\cite{ho2020denoising,song2020denoising,nichol2021improved}, including image synthesis~\cite{dhariwal2021diffusion,nichol2021glide}, video generation~\cite{sora2024,wu2023tune}, and point cloud generation~\cite{luo2021diffusion,nakayama2023difffacto}. However, the application of diffusion models to grasp detection remains under-explored~\cite{nguyen2023language,urain2023se}. Unlike previous works that mostly focus on language-driven grasp detection in 2D image~\cite{vuong2023grasp,tang2023ttask,tziafas2023language} or in 3D point cloud with single object~\cite{nguyen2023language,tang2023graspgpt}, our work proposes a new diffusion model for language-driven 6-DoF grasp detection in cluttered 3D point cloud environments. In practice, language-driven 6-DoF grasp detection is a fine-grained task driven by the language, e.g., \textit{``Grasp the blue cup''} and \textit{``Grasp the black cup''} are for two different objects in the scene. Therefore, we introduce a new negative prompt guidance learning strategy to tackle this fine-grained nature. The main motivation of this strategy is to learn a negative prompt embedding that can encapsulate the notion of other undesired objects in the scene. When being applied in the generation process, the learned negative prompt embedding explicitly guides the grasp pose toward the desired object while avoiding unwanted ones. Our LGrasp6D method is an end-to-end pipeline that enables humans to command the robot to grasp desired objects in a cluttered scene using a natural language prompt. Figure~\ref{Fig:Intro} illustrates examples of our language-driven grasp detection in 3D point clouds. To summarize, our contributions are three-fold:
\begin{itemize}
    \item We propose Grasp-Anything-6D, a large-scale dataset for language-driven 6-DoF grasp detection in 3D point clouds.
    \item We propose a new diffusion model that learns and applies negative prompt guidance, significantly enhancing the grasp detection process.
    \item We demonstrate that our dataset and the proposed method outperform other approaches and enable successful real-world robotic manipulation.
\end{itemize}
\section{Related Works} \label{Sec:Related Works}
\textbf{Robot Grasp Detection.}
Several works for robot grasp detection addressed the task on 2D images~\cite{jiang2011efficient,redmon2015real,lenz2015deep,zhang2019roi}. Thanks to recent advancements in 3D perception~\cite{qi2017pointnet,qi2017pointnet++,ni2020pointnet++,kerr2023lerf}, 6-DoF grasp detection in 3D point clouds is gaining increasing interest in both computer vision and robotics communities. In general, two main lines of approaches have been employed for this problem. The first line~\cite{mousavian20196,liang2019pointnetgpd,murali20206,gou2021rgb} involves sampling various grasp candidates across the input point cloud, followed by validation using a grasp evaluator network.
The primary drawback of methods in the first line lies in their inefficiency in terms of speed, attributed to their multi-stage structure. In contrast, the second line of research detects the grasp poses in an end-to-end manner~\cite{fang2020graspnet,ni2020pointnet++,qin2020s4g,Wang_2021_ICCV}, achieving a more favorable balance in terms of the time-accuracy tradeoff. For instance, Qin~\textit{et al.}~\cite{qin2020s4g} presented a novel gripper contact model and a single-shot neural network to predict amodal grasp proposals, while Wang~\textit{et al.}~\cite{Wang_2021_ICCV} proposed the concept of graspness to detect the scene graspable areas. However, most of the existing 6-DoF grasp detection methods do not take into account language as the input. In this work, we follow the end-to-end approach. Our method integrates language instructions into the grasp detection process, ensuring that the detected grasp pose is aligned with the user-specified requirements.

\textbf{Language-Guided Robotic Manipulation.}
Amidst the remarkable strides of large language models~\cite{devlin2018bert,brown2020language,gpt4} and large vision-language models~\cite{radford2021learning,li2022grounded,li2022blip}, several recent works have harnessed language semantics for multiple tasks of robot manipulation~\cite{brohan2022rt,garg2022lisa,nguyen2023open,ren2023leveraging,zitkovich2023rt}. For instance, the authors in~\cite{garg2022lisa} presented a framework that learns meaningful skills from language-based expert demonstrations. Nguyen \textit{et al.}~\cite{nguyen2023open} utilized language to detect open-vocabulary affordance for 3D point cloud objects. More recently, the authors in~\cite{zitkovich2023rt} proposed a family of models that learn generalizable and semantically aware policies derived from fine-tuning large vision-language models trained on web-scale data. Besides, the task of language-guided grasp detection is also under active exploration. However, approaches in this research direction present several limitations. Specifically, the works in~\cite{nguyen2023language,tang2023graspgpt} only addressed single-object scenarios. The authors in~\cite{vuong2023grasp,tang2023ttask,tziafas2023language,vuong2024language} exclusively detected 2D rectangle grasp poses. More recently, the method in~\cite{rashid2023language} required multiple viewpoints of the scene to build the language field, which is not always obtainable. In contrast to these works, our method is capable of detecting language-driven 6-DoF grasp poses in cluttered single-view point cloud scenes, making it well-suitable for real-world robotic applications.

\textbf{Diffusion Probabilistic Models.}
Diffusion models are a class of neural generative models, based on the stochastic diffusion process in Thermodynamics~\cite{sohl2015deep}. In this setting, a sample from the data distribution is gradually noised by the forward diffusion process. Then, a neural network learns the reverse process to gradually denoise the sample. First introduced by~\cite{sohl2015deep}, diffusion models have been further simplified and accelerated~\cite{song2020denoising,ho2020denoising}, and improved significantly~\cite{austin2021structured,wang2024patch,nichol2021improved,song2021maximum}. In recent years, many works have explored applying diffusion models for various generation problems, such as image synthesis~\cite{xie2023smartbrush,dhariwal2021diffusion}, scene synthesis~\cite{vuong2023language,lee2024posediff}, and human motion generation~\cite{wang2023fg,tevet2022human}. In robotics, diffusion models have also been applied to many problems ranging from policy learning~\cite{chi2023diffusion,ze2023gnfactor}, task and motion planning~\cite{liu2022structdiffusion,urain2023se} to robot design~\cite{wang2023diffusebot}. However, few works have adopted diffusion models for the task of grasp detection~\cite{nguyen2023language,urain2023se}. Notably, none of them consider the task of language-driven grasping in 3D cluttered point clouds. To address this challenging task, we propose a novel diffusion model that incorporates a new negative prompt guidance learning approach. This strategy assists in guiding the generation process toward the desired grasp distributions while steering away from unwanted ones. The effectiveness of our proposed approach is demonstrated through comprehensive experiments.
\section{The Grasp-Anything-6D Dataset} \label{Sec:Dataset}
Our Grasp-Anything-6D dataset is built upon the Grasp-Anything dataset~\cite{vuong2023grasp}. Leveraging foundation models~\cite{chatgpt,rombach2022high}, Grasp-Anything is a large-scale dataset for 2D language-driven grasp detection. This dataset consists of 1M RGB images and $\approx$3M objects, substantially surpassing prior datasets in diversity and volume. To bring the problem from 2D to 3D, we first leverage the state-of-the-art depth estimation method ZoeDepth~\cite{bhat2023zoedepth} to estimate the depth map given the input RGB images of Grasp-Anything. Subsequently, we perform projection and manual verification to ensure the quality of our dataset. 

\textbf{3D Scenes and 6-DoF Grasps Construction.}
For a given 2D scene in the Grasp-Anything dataset~\cite{vuong2023grasp}, we first employ ZoeDepth~\cite{bhat2023zoedepth} to get the depth map for the image and establish the 3D point cloud scene with the camera model assumption of a 55-degree field of view and central principal point. We select the field of view of 55 degrees because it leads to 3D scenes representing real object scales. Next, to bring a 2D grasp configuration to 3D, we first infer the 3D position using the center of the 2D rectangle grasp in the image. Since in the Grasp-Anything dataset, the position of the 2D grasp may not necessarily be integers, we employ bilinear interpolation to calculate its corresponding 3D position by considering the 3D coordinates of neighboring pixels. The position determines the translation part of the grasp representation. For the rotation part, we utilize the angle of the 2D rectangle grasp and map it to 3D to rotate the 6-DoF grasp pose accordingly. The width of the 6-DoF grasp is derived from the width of the 2D grasp. Adhering to the Robotiq 2F-140 gripper specifications~\cite{2f140}, we establish the maximum grasp width as \SI{202.1}{\mm}, and discard any grasps exceeding this threshold. The overview of our 3D scenes and 6-DoF grasps construction process is illustrated in Figure~\ref{Fig:dataset_pipeline}. We maintain the same scene description and grasping prompts as in the Grasp-Anything dataset. Additionally, we infer the 3D masks on the point cloud scene for every object in the grasp list using the corresponding segmentation masks in 2D. 

\vspace{-4ex}
\begin{figure}[h]
    \centering
    \includegraphics[width=\linewidth]{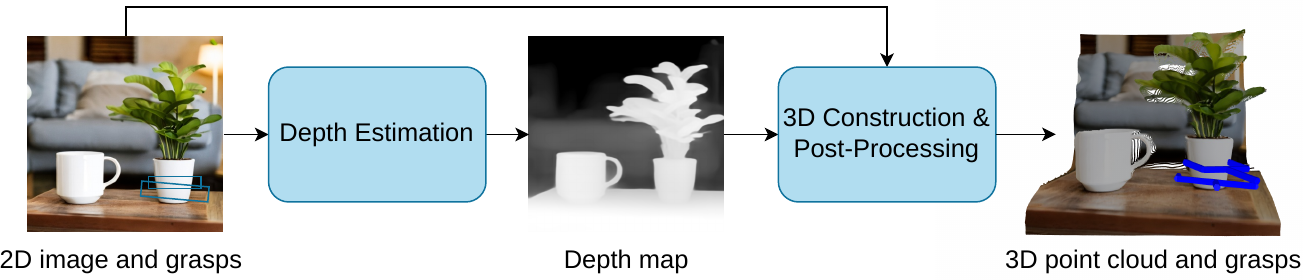}
    \caption{Overview of Grasp-Anything-6D dataset construction pipeline.}
    \label{Fig:dataset_pipeline}
\end{figure}
\vspace{-4ex}

\textbf{Post-Processing.} After converting the 2D scenes and grasps to 3D, we manually check for the collision of the 6-DoF grippers and point cloud scenes, as well as whether the grippers can stably grasp the objects. These problems may occur since the depth estimation network~\cite{bhat2023zoedepth} may not always bring good results. Concretely, we remove the grasp poses that collide with the point cloud scene and those whose closing volume between the fingers does not intersect the object determined by its 3D mask.
As a result, our Grasp-Anything-6D dataset consists of 1M point cloud scenes, with comprehensive grasping prompts, and 200M corresponding dense and high-quality 6-DoF grasp poses.
\vspace{-2ex}
\section{Grasp Detection using Negative Prompt Guidance} \label{Sec:Method}
\vspace{-1ex}
\subsection{Motivation}
Diffusion models have recently shown remarkable performance across various generation tasks. This makes it a promising choice for our problem, where grasp detection can be viewed as a generation process conditioned on both the point cloud scene and the language prompt. The main contribution of our diffusion model is a novel negative prompt guidance learning strategy. This is motivated by the notion that generating a grasp for a specific object can benefit significantly from guidance away from unwanted objects in the scene. Our LGrasp6D leverages this by integrating learning the negative prompt embedding into the training process alongside the conventional denoising objective. Our target for the negative prompt embedding is to capture the notion of other undesired objects in the scene. The learned negative prompt guidance is then applied in the sampling to assist the grasp detection process.
\subsection{Language-Driven 6-DoF Grasp Detection}

\vspace{-4ex}
\begin{figure}[h]
    \centering
    \includegraphics[width=\linewidth]{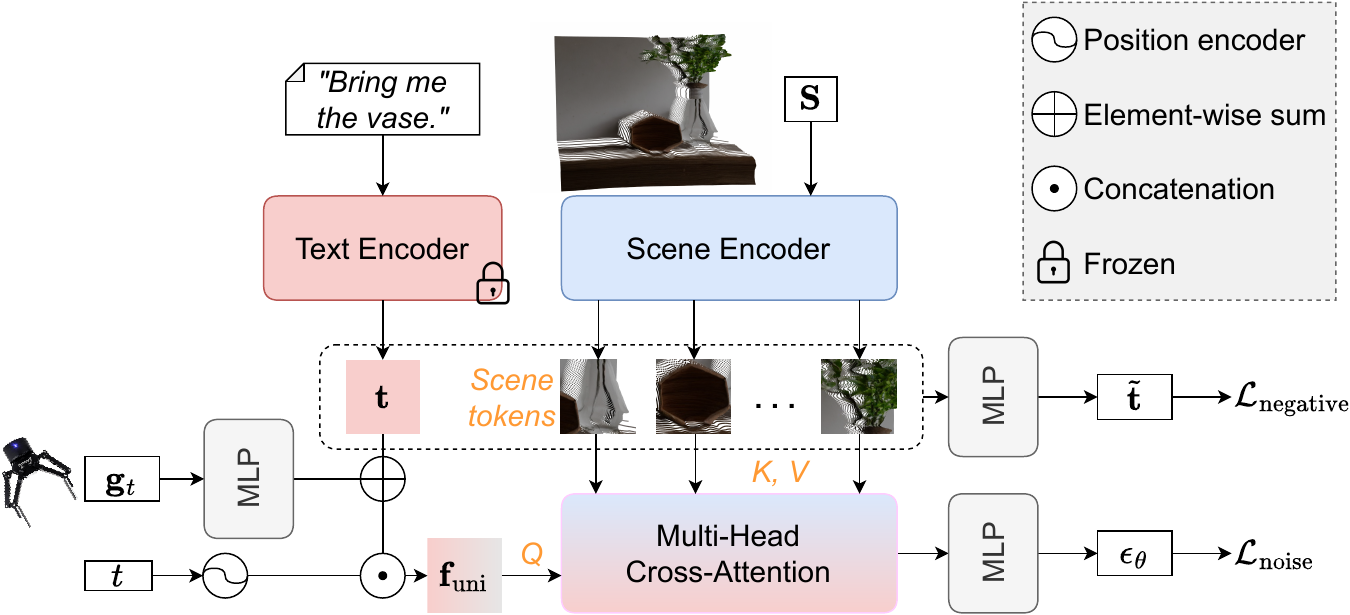}
    \caption{Overview of our denoising network. In addition to predicting the noise, our denoising network is trained to learn the negative prompt embedding, which is supervised by the text embeddings associated with other unwanted objects in the same scene.}
    \label{Fig:Method}
\end{figure}
\vspace{-4ex}

\textbf{Forward Process.}
We use the $\mathfrak{se}(3)$ Lie algebra~\cite{samelson2012notes} to represent the translation and rotation of our grasp poses. We use the $\mathfrak{se}(3)$ representation since it allows us to conveniently perform the operators of addition and multiplication by a scalar required by our forward and reverse diffusion processes. The grasp pose is then represented as the concatenation of $\mathfrak{se}(3)$ vector and the grasp width. Note that one can easily convert between the $\mathfrak{se}(3)$ and $4\times 4$ transformation matrix representation using the logarithm map and exponential map~\cite{onishchik2012lie}.
Given a target grasp pose $\mathbf{g}_0$ in the training dataset, in the forward process, we obtain a sequence of perturbed grasp poses by gradually adding to it small amounts of Gaussian noise in $T$ steps. The noise step sizes are specified by a predefined variance schedule $\left \{ \beta_t \in (0, 1)\right \}^T_{t=1}$.
The forward process is formulated as:

\begin{equation}
    q\left( \mathbf{g}_t | \mathbf{g}_{t-1} \right ) = \mathcal{N}\left ( \mathbf{g}_t;\sqrt{1-\beta_t}\mathbf{g}_{t-1}, \beta_t\mathbf{I} \right ).
\end{equation}
The perturbed pose at any arbitrary time step $t$ can be obtained by:

\begin{equation}    
\mathbf{g}_t = \sqrt{\bar{\alpha}_t}\mathbf{g}_0 + \sqrt{1-\bar{\alpha}_t}\bm{\epsilon},
\end{equation}
where $\bar{\alpha}_t =\prod_{i=1}^{t}\alpha_t$ with $\alpha_t = 1-\beta_t$ and $\bm{\epsilon} \sim \mathcal{N}\left (\mathbf{0}, \mathbf{I} \right )$. When $T\rightarrow \infty$, $\mathbf{g}_T$ is equivalent to $\mathcal{N}\left (\mathbf{0}, \mathbf{I} \right )$~\cite{ho2020denoising}.

\textbf{Denoising Network.}
Our denoising network approximates the added noise described in the forward process by incorporating both the conditions of the point cloud scene and the textual prompt specifying the target object. Additionally, our network learns a vector representation serving as a negative prompt guidance. In our framework, this representation is guided by the available textual prompts associated with other objects within the scene. The details of our denoising network are shown in Figure~\ref{Fig:Method}.

The denoising network first encodes the grasp pose $\mathbf{g}_t$ at a specific time step $t$ using a grasp encoder MLP. The scene encoder encodes the point cloud scene $\mathbf{S}$ to $n_s$ scene embedding tokens. In our framework, we use PointNet++~\cite{qi2017pointnet++} as the underlying architecture for the scene encoder. For the textual prompt, we employ a pretrained text encoder to get a text embedding $\mathbf{t}$. We use sinusoidal positional embedding~\cite{ho2020denoising} to embed the time step $t$ to a high-dimensional vector. Afterward, we form the unified representation $\mathbf{f}_{\text{uni}}$ of the grasp pose, the textual prompt, and the time step. In concrete, we concatenate the time embedding with the element-wise sum of the grasp embedding and the text embedding. Subsequently, we adopt the multi-head cross-attention mechanism to capture the intricate relationships among input components. Specifically, the query for the cross-attention is the unified feature $\mathbf{f}_{\text{uni}}$ while the $n_s$ scene tokens serve as keys and values. The output of the cross-attention module is then fed to an MLP to obtain the predicted noise $\bm{\epsilon}_{\bm{\theta}}\left ( \mathbf{g}_t,\mathbf{S},\mathbf{t},t\right )$.
We supervise the noise prediction by optimizing the simplified objective function as described in~\cite{ho2020denoising}:
\begin{equation}
    \mathcal{L}_{\text{noise}} = \mathbb{E}_{\bm{\epsilon},\mathbf{g}_0, \mathbf{S}, \mathbf{t}, t}\left [ \left \| \bm{\epsilon}_{\bm{\theta}}\left ( \mathbf{g}_t,\mathbf{S},\mathbf{t},t\right ) - \bm{\epsilon} \right \|^2\right].
    \label{Eq:NoiseLoss}
\end{equation}

\textbf{Negative Prompt Learning.} Along with estimating the noise, the denoising network also produces the negative prompt embedding $\tilde{\mathbf{t}}$. We subtract the text embedding $\mathbf{t}$ from the scene tokens, compute the average over $n_s$ resulting vectors, and then pass the output through an MLP to get $\tilde{\mathbf{t}}$. Our purpose for $\tilde{\mathbf{t}}$ is that it can encapsulate the notion of other objects in the same scene. Hence, our objective is to minimize the distance between $\tilde{\mathbf{t}}$ and the negative text embeddings which are text embeddings corresponding to other objects. Specifically, we define the loss function for the learning of negative prompt embedding as:
\begin{equation}
    \mathcal{L}_{\text{negative}} = D\left(\tilde{\mathbf{t}}, \bar{\mathbf{T}}= \left \{ \bar{\mathbf{t}}_i \right \}_{i=1}^{m} \right ) = \text{min}_{i=1}^m\left \| \tilde{\mathbf{t}}-\bar{\mathbf{t}}_i \right \|_2^2,
    \label{Eq:NegLoss}
\end{equation}
where $D\left (\cdot \right )$ denotes the distance function, $\bar{\mathbf{T}}= \left \{ \bar{\mathbf{t}}_i \right \}_{i=1}^{m}$ is the set of $m$ negative text embeddings. In training, we simultaneously optimize both the denoising loss $\mathcal{L}_{\text{noise}}$ and the loss for negative prompt embedding learning $\mathcal{L}_{\text{negative}}$.

\textbf{Reverse Process with Negative Prompt Guidance.}
Different from conventional diffusion models, our reverse diffusion process utilizes the negative prompt embedding learned during the training to guide the grasp pose toward the desired object while avoiding unwanted ones. 
Our generation process can be formulated as a conditional distribution $p\left ( \mathbf{g} | \mathbf{S},\mathbf{t},\neg\tilde{\mathbf{t}}\right )$. The negation sign of $\tilde{\mathbf{t}}$ indicates that we aim to sample the grasp pose with the absence of the $\tilde{\mathbf{t}}$ prompt condition. 
We begin with the following proposition:
\begin{proposition}
\label{Prop:prop_1}
    The conditional distribution $p\left ( \mathbf{g} | \mathbf{S},\mathbf{t},\neg\tilde{\mathbf{t}}\right )$ can be factorized as
    \begin{equation}
    \label{Eq:propto}
    p\left ( \mathbf{g} | \mathbf{S},\mathbf{t},\neg\tilde{\mathbf{t}}\right ) \propto p\left (\mathbf{g}|\mathbf{S} \right ) \frac{p\left ( \mathbf{g}|\mathbf{t},\mathbf{S}\right )}{p\left ( \mathbf{g}|\tilde{\mathbf{t}},\mathbf{S}\right )}.
    \end{equation}
\end{proposition}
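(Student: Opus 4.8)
The plan is to derive the factorization from Bayes' rule by treating the three conditioning events $\mathbf{t}$, $\neg\tilde{\mathbf{t}}$, and $\mathbf{S}$ appropriately and making a conditional-independence assumption analogous to the one underlying classifier-free guidance. First I would write
\begin{equation}
p\left(\mathbf{g}\,\middle|\,\mathbf{S},\mathbf{t},\neg\tilde{\mathbf{t}}\right) = \frac{p\left(\mathbf{t},\neg\tilde{\mathbf{t}}\,\middle|\,\mathbf{g},\mathbf{S}\right)\,p\left(\mathbf{g}\,\middle|\,\mathbf{S}\right)}{p\left(\mathbf{t},\neg\tilde{\mathbf{t}}\,\middle|\,\mathbf{S}\right)},
\end{equation}
where the denominator does not depend on $\mathbf{g}$ and can therefore be absorbed into the proportionality constant. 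The next step is to assume that, conditioned on the grasp $\mathbf{g}$ and the scene $\mathbf{S}$, the positive prompt $\mathbf{t}$ and the negated prompt $\neg\tilde{\mathbf{t}}$ act independently, so that $p\left(\mathbf{t},\neg\tilde{\mathbf{t}}\,\middle|\,\mathbf{g},\mathbf{S}\right) = p\left(\mathbf{t}\,\middle|\,\mathbf{g},\mathbf{S}\right)\,p\left(\neg\tilde{\mathbf{t}}\,\middle|\,\mathbf{g},\mathbf{S}\right)$.

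Then I would handle each factor. For the positive prompt, Bayes' rule gives $p\left(\mathbf{t}\,\middle|\,\mathbf{g},\mathbf{S}\right) = p\left(\mathbf{g}\,\middle|\,\mathbf{t},\mathbf{S}\right)\,p\left(\mathbf{t}\,\middle|\,\mathbf{S}\right)/p\left(\mathbf{g}\,\middle|\,\mathbf{S}\right)$, and the factors $p\left(\mathbf{t}\,\middle|\,\mathbf{S}\right)$ are again $\mathbf{g}$-independent. For the negation, I would interpret $p\left(\neg\tilde{\mathbf{t}}\,\middle|\,\mathbf{g},\mathbf{S}\right) = 1 - p\left(\tilde{\mathbf{t}}\,\middle|\,\mathbf{g},\mathbf{S}\right)$ in the appropriate limiting/regime sense used in guidance derivations — or, more cleanly, follow the negative-guidance convention in which sampling with the \emph{absence} of a condition corresponds to dividing by that condition's likelihood ratio, giving a factor $\propto 1/p\left(\mathbf{g}\,\middle|\,\tilde{\mathbf{t}},\mathbf{S}\right)$ after applying Bayes' rule to $p\left(\tilde{\mathbf{t}}\,\middle|\,\mathbf{g},\mathbf{S}\right)$ and discarding $\mathbf{g}$-independent terms. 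Collecting the $\mathbf{g}$-dependent pieces $p\left(\mathbf{g}\,\middle|\,\mathbf{S}\right)$, $p\left(\mathbf{g}\,\middle|\,\mathbf{t},\mathbf{S}\right)/p\left(\mathbf{g}\,\middle|\,\mathbf{S}\right)$, and $1/p\left(\mathbf{g}\,\middle|\,\tilde{\mathbf{t}},\mathbf{S}\right) \cdot p\left(\mathbf{g}\,\middle|\,\mathbf{S}\right)$ and absorbing everything else into the normalizing constant yields, after the $p\left(\mathbf{g}\,\middle|\,\mathbf{S}\right)$ terms partially cancel, exactly $p\left(\mathbf{g}\,\middle|\,\mathbf{S}\right)\,p\left(\mathbf{g}\,\middle|\,\mathbf{t},\mathbf{S}\right)/p\left(\mathbf{g}\,\middle|\,\tilde{\mathbf{t}},\mathbf{S}\right)$.

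I expect the main obstacle to be the precise justification of how the negated condition $\neg\tilde{\mathbf{t}}$ enters — i.e., making rigorous the step that sampling "with the absence of $\tilde{\mathbf{t}}$" contributes an inverse likelihood factor rather than a direct one. This is exactly the informal part of classifier-free (and negative-prompt) guidance, so I would frame it as a modeling assumption: define the target density to be proportional to the stated product, then show this is the natural reading of "conditioning on $\neg\tilde{\mathbf{t}}$" via the conditional-independence factorization above, mirroring how negative prompts are formalized in text-to-image diffusion. The remaining steps — applications of Bayes' rule and the bookkeeping of which factors are constant in $\mathbf{g}$ — are routine. A brief remark relating this score-wise (taking $\nabla_{\mathbf{g}}\log$ of both sides gives an additive combination of the unconditional score and the two guidance terms) would make the connection to the sampling procedure transparent and justify why the proportionality constant is harmless.
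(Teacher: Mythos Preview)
Your proposal is correct and follows essentially the same route as the paper: Bayes' rule, an independence assumption on the prompts, and the informal step replacing $1-p(\tilde{\mathbf{t}}\mid\cdot)$ by $1/p(\tilde{\mathbf{t}}\mid\cdot)$, which the paper also makes (written as a bare $\propto$ without comment) and which you correctly identify as the only non-routine point. The paper's derivation differs only cosmetically---it manipulates the full joint $p(\mathbf{g},\mathbf{S},\mathbf{t},\neg\tilde{\mathbf{t}})$ and states its assumption as unconditional independence of $\tilde{\mathbf{t}},\mathbf{t},\mathbf{S}$, whereas you condition on $\mathbf{S}$ from the start---but the content is the same and your version is slightly more direct.
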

\begin{proof}
    See Supplementary Material.
\end{proof}

With Equation~\ref{Eq:propto}, alongside detecting grasps conditioning on the scene and the user-specified prompt via $p\left (\mathbf{g}|\mathbf{S} \right )$ and $p\left ( \mathbf{g}|\mathbf{t},\mathbf{S}\right )$, we can now seamlessly incorporate the negative prompt guidance into our reverse process via $p\left ( \mathbf{g}|\tilde{\mathbf{t}},\mathbf{S}\right )$.
\begin{remark}
Liu~\textit{et al.}~\cite{liu2022compositional} demonstrated how diffusion models can be composed based on their connection to energy-based models~\cite{du2020compositional}.
We recall this relationship in detail in our Supplementary.
Consequently, following the expression in~\cite{liu2022compositional}, we can formulate our compositional denoising step in the reverse process as:
\begin{equation}
    \tilde{\bm{\epsilon}}_{\bm{\theta}}\left ( \mathbf{g}_t, \mathbf{S}, \mathbf{t},\neg\tilde{\mathbf{t}}, t \right ) = \bm{\epsilon}_{\bm{\theta}}\left (\mathbf{g}_t,\mathbf{S},\varnothing,t \right ) + w\left ( \bm{\epsilon}_{\bm{\theta}}\left (\mathbf{g}_t,\mathbf{S},\mathbf{t},t \right ) - \bm{\epsilon}_{\bm{\theta}}\left (\mathbf{g}_t,\mathbf{S},\tilde{\mathbf{t}},t \right )\right ).
    \label{Eq:denoisingstep}
\end{equation}
\end{remark}
In Equation~\ref{Eq:denoisingstep}, $p\left (\mathbf{g}|\mathbf{S} \right )$, $p\left ( \mathbf{g}|\mathbf{t},\mathbf{S}\right )$ and $p\left ( \mathbf{g}|\tilde{\mathbf{t}},\mathbf{S}\right )$ are parameterized by $\bm{\epsilon}_{\bm{\theta}}\left (\mathbf{g}_t,\mathbf{S},\varnothing,t \right )$, $\bm{\epsilon}_{\bm{\theta}}\left (\mathbf{g}_t,\mathbf{S},\mathbf{t},t \right )$ and $\bm{\epsilon}_{\bm{\theta}}\left (\mathbf{g}_t,\mathbf{S},\tilde{\mathbf{t}},t \right )$ respectively. $\bm{\epsilon}_{\bm{\theta}}\left (\mathbf{g}_t,\mathbf{S},\tilde{\mathbf{t}},t \right )$ is the output of the denoising network when the learned negative prompt embedding $\tilde{\mathbf{t}}$ is plugged in as the text embedding. $w$ is a hyperparameter that controls the strength of the negative guidance.
$\bm{\epsilon}_{\bm{\theta}}\left (\mathbf{g}_t,\mathbf{S},\varnothing,t \right )$ is the predicted noise when the text condition is discarded. In training, we learn $\bm{\epsilon}_{\bm{\theta}}\left (\mathbf{g}_t,\mathbf{S},\varnothing,t \right )$ by randomly masking out the text embedding with a predefined probability $p_{\text{mask}}$. Given the denoising step defined in Equation~\ref{Eq:denoisingstep}, we can now sample grasps from Gaussian noise by applying the reverse process from timestep $T$ back to $0$ using the following formulation:
\begin{equation}
    \mathbf{g}_{t-1} = \frac{1}{\sqrt{\alpha_t}}\left ( \mathbf{g}_t - \frac{1-\alpha_t}{\sqrt{1-\bar{\alpha}_t}}\tilde{\bm{\epsilon}}_{\bm{\theta}}\left ( \mathbf{g}_t, \mathbf{S}, \mathbf{t},\neg\tilde{\mathbf{t}}, t \right ) \right ) + \sqrt{\beta_t}\mathbf{z},
\end{equation}
where $\mathbf{z} \sim \mathcal{N}\left (\mathbf{0}, \mathbf{I} \right )$ if the time step $t>1$, else $\mathbf{z} = \mathbf{0}$.

\subsection{Training and Sampling}
We define the overall loss function for training as $\mathcal{L} = 0.9 \mathcal{L}_{\text{noise}} + 0.1 \mathcal{L}_{\text{negative}}$. We utilize the pretrained CLIP ViT-B/32 text encoder~\cite{radford2021learning} for our text encoder and freeze it during training. We set the number of timesteps to $T=200$, and set the forward diffusion variances to increase linearly from $\beta_1 = 10^{-4}$ to $\beta_T=0.02$. The probability of masking out the text embedding is set to $p_{\text{mask}} = 0.1$. The whole network is trained over 200 epochs on a cluster of 8 A100 GPUs with a batch size of 128. We use Adam optimizer~\cite{kingma2014adam} with the learning rate $10^{-3}$ and the weight decay $10^{-4}$.
In sampling, we set the negative guidance scale to $w=0.2$. To obtain a favorable inference speed, we pre-compute the scene tokens, the text embedding $\mathbf{t}$, and the negative prompt embedding $\tilde{\mathbf{t}}$ since they are independent of the timestep. This precomputation substantially reduces the detection time, making our method feasible for practical implementation on real robots.
\section{Experiments} \label{Sec:Experiments}
In this section, we evaluate the effectiveness of our LGrasp6D trained on the Grasp-Anything-6D dataset via several vision-based and real robot experiments.

\subsection{Language-Driven 6-DoF Grasp Detection Results}

\textbf{Baselines.}
We evaluate our method against generative approaches for 6-DoF grasp detection, which are 6-DoF GraspNet~\cite{mousavian20196}, SE(3)-DF~\cite{urain2023se}, and 3DAPNet~\cite{nguyen2023language}.
We adapt the frameworks of these baselines to integrate textual input into the detection process. To ensure a fair comparison, we utilize the CLIP ViT-B/32~\cite{radford2021learning} as the text encoder for all methods. We also include our method without utilizing negative prompt guidance (denotes as Ours w.o. NPG) as an additional baseline for comparison.
Detailed implementation information for all baselines is available in our Supplementary Material.

\textbf{Setup.}
We train all baselines on 80\% scenes of the Grasp-Anything-6D dataset and evaluate them on the remaining 20\%. For each pair of point cloud scene-textual prompts, we detect 64 grasp poses for evaluation. To benchmark the methods' detection capabilities, we use three metrics, which are the coverage rate~\cite{mousavian20196}, earth mover's distance~\cite{urain2023se}, and collision-free rate~\cite{zhao2021regnet}. The coverage rate (CR)~\cite{mousavian20196} measures how well the space of ground-truth grasps is covered by the detected grasps. The earth mover's distance (EMD)~\cite{urain2023se} evaluates the dissimilarity between the distributions of ground-truth grasps and the detected ones. Finally, the collision-free rate (CFR)~\cite{zhao2021regnet} assesses the occurrence of collisions between the gripper of the detected grasps and the scene. The final results for all metrics are averaged across all scene-text prompt pairs. 
Since latency is a critical factor for any robotics applications, we additionally benchmark the inference speeds of all methods using the inference time in seconds (IT). Specifically, for each baseline, we calculate its inference time for detecting 1000 grasp poses across 1000 different scene-text pairs and take the average result.

\textbf{Quantitative Results.}
Table~\ref{Tab:6dga} shows the results of language-driven 6-DoF grasp detection on our Grasp-Anything-6D dataset. The outcomes indicate the advantages of our methods, even without negative prompt guidance, over other baselines. Our complete method consistently achieves the highest scores across all three metrics for grasp detection capability. It significantly surpasses the second-best method, which is our framework without negative prompt guidance, with large margins of $0.1235$ on CR, $0.2249$ on EMD, and $0.0370$ on CFR. This highlights the effectiveness of our proposed negative prompt guidance learning. Regarding latency, our methods achieve competitive IT scores compared to other diffusion model-based methods (SE(3)-DF and 3DAPNet). Although 6-DoF GraspNet achieves the best IT, it is important to note that this is a variational autoencoder-based method requiring only a single decoding step, and its results on the remaining metrics are poor.

\vspace{-3ex}
\begin{table}[ht]
\renewcommand\arraystretch{1.2}
\renewcommand{\tabcolsep}{5.5pt}
\small
\centering
\begin{tabular}{l|cccc}\Xhline{1.0pt}
\textbf{Baseline} & \textbf{CR}$\uparrow$ & \textbf{EMD}$\downarrow$ & \textbf{CFR}$\uparrow$ & \textbf{IT}$\downarrow$ \\ \hline
6-DoF GraspNet~\cite{mousavian20196} & 0.3802 & 0.8035 & 0.6900 & \bf{0.4216} \\ 
SE(3)-DF~\cite{urain2023se} & 0.4290 & 0.7565 & 0.7325 & 1.7233 \\
3DAPNet~\cite{nguyen2023language} & 0.4777 & 0.7381 & 0.7213 &  3.4274 \\ \hline

LGrasp6D (ours) w.o. NPG & \underline{0.5459} & \underline{0.6262} & \underline{0.7336} & \underline{1.4328} \\
LGrasp6D (ours) & \bf{0.6694} & \bf{0.4013} & \bf{0.7706} & 1.4832 \\  
\Xhline{1.0pt}
\end{tabular}
\vspace{1ex}
\caption{Results on Grasp-Anything-6D dataset.}
\label{Tab:6dga}
\end{table}
\vspace{-5ex}

\textbf{Qualitative Results.}
We present the qualitative results of all baselines in detecting language-driven grasps in Figure~\ref{Fig:Qual}. Point cloud scenes are selected from our Grasp-Anything-6D dataset. The results indicate that LGrasp6D exhibits a significantly stronger capability in detecting language-driven grasp poses compared to the others. Specifically, our method excels at focusing on the desired objects, whereas other methods often get distracted by undesired ones. More qualitative results are provided in our Supplementary Material.

\begin{figure}[ht]
    \centering
    \includegraphics[width=0.96\linewidth]{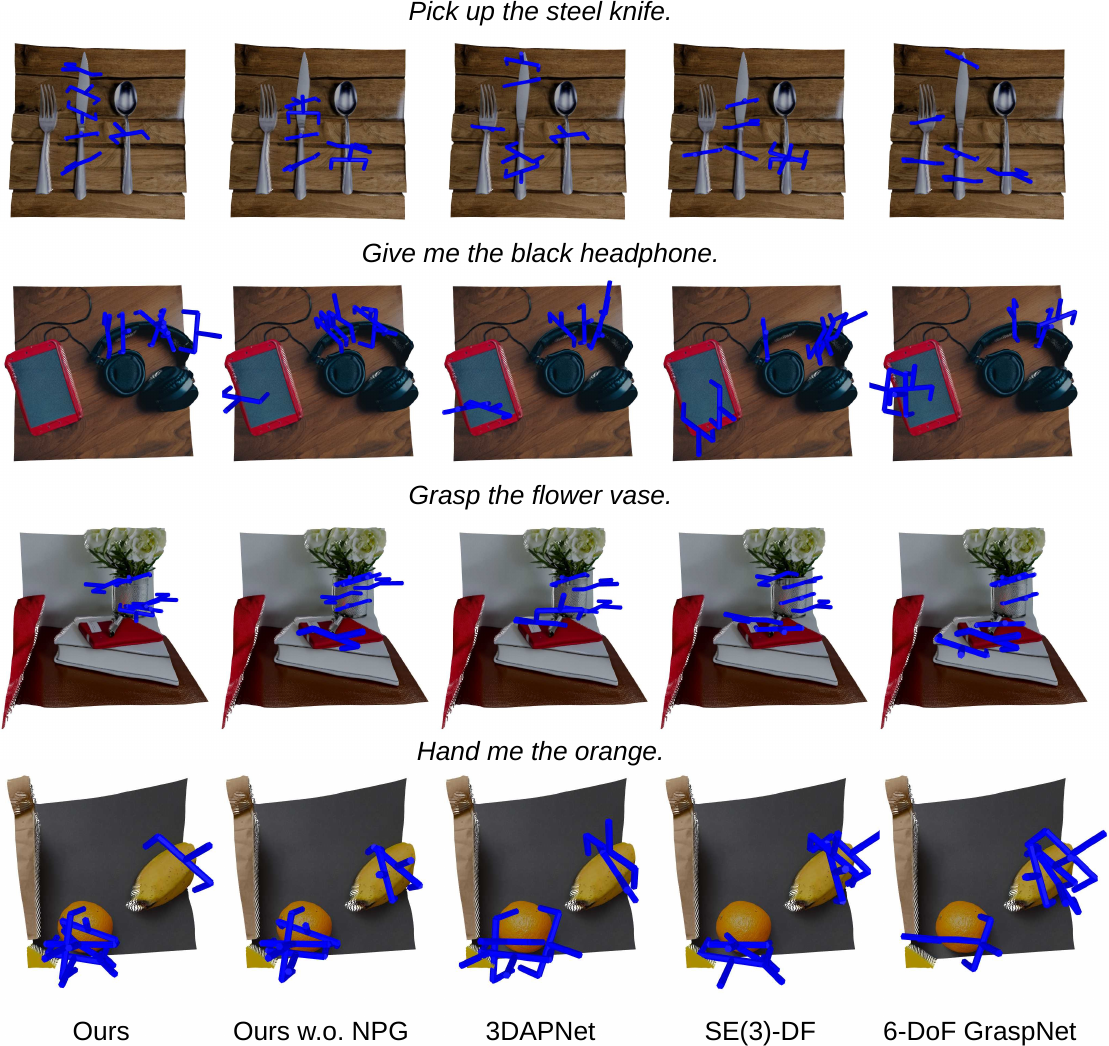}
    \vspace{-1ex}
    \caption{Language-driven 6-DoF grasp detection qualitative results.}
    \label{Fig:Qual}
\vspace{-5ex}
\end{figure}

\begin{table}[b]
\renewcommand\arraystretch{1.2}
\renewcommand{\tabcolsep}{4.0pt}
\small
\centering
\parbox{0.5\linewidth}{
\resizebox{0.5\textwidth}{!}{
\begin{tabular}{l|cccc}\Xhline{1.0pt}
\textbf{Baseline} & \textbf{CR}$\uparrow$ & \textbf{EMD}$\downarrow$ & \textbf{CFR}$\uparrow$ & \textbf{IT}$\downarrow$ \\ \hline
LGrasp6D - 10 steps & 0.5611 & 0.5273 & 0.7368 & \bf{0.0726} \\
LGrasp6D - 20 steps & 0.6425 & 0.4300 & 0.7580 & \underline{0.1464} \\
LGrasp6D - 50 steps & 0.6439 & 0.4254 & 0.7639 & 0.3991 \\
LGrasp6D - 100 steps & \underline{0.6522} & \underline{0.4110} & \underline{0.7633} & 0.8427 \\
LGrasp6D - 200 steps & \bf{0.6694} & \bf{0.4013} & \bf{0.7706} & 1.4832 \\ 
\Xhline{1.0pt}
\end{tabular}
}
\vspace{1ex}
\caption{DDIM accelerating results.}
\label{Tab:accelerating}
}
\hfill
\parbox{0.46\linewidth}{
\resizebox{0.46\textwidth}{!}{
\begin{tabular}{l|ccc}\Xhline{1.0pt}
\textbf{Baseline} & \textbf{CR}$\uparrow$ & \textbf{EMD}$\downarrow$ & \textbf{CFR}$\uparrow$ \\ \hline
6-DoF GraspNet~\cite{mousavian20196} & 0.3498 & 0.8501 & 0.6927 \\ 
SE(3)-DF~\cite{urain2023se} & 0.3892 & 0.7622 & 0.7205 \\
3DAPNet~\cite{nguyen2023language} & 0.4491 & 0.7434 & 0.7092 \\ \hline

LGrasp6D (ours) w.o. NPG & \underline{0.5208} & \underline{0.6422} & \underline{0.7385} \\
LGrasp6D (ours) & \bf{0.6420} & \bf{0.4197} & \bf{0.7683} \\ 
\Xhline{1.0pt}
\end{tabular}
}
\vspace{1ex}
\caption{Cross-dataset results.}
\label{Tab:ctgn}
}
\end{table}

\textbf{Accelerating Detection.} While latency is critical for robot applications, diffusion models are notorious for their low inference speed~\cite{song2023consistency}. Despite our method achieving a competitive inference speed, as shown in Table~\ref{Tab:6dga}, we continue to seek even faster models with comparable performance. Hence, we benchmark our LGrasp6D employing the fast reversion technique of denoising diffusion implicit models (DDIM)~\cite{song2020denoising}, with numbers of sampling steps of 200 (the original one), 100, 50, 20, and 10. The results are shown in Table~\ref{Tab:accelerating}. We can observe that decreases in the sampling step lead to decreases in performance. However, all the variants still outperform other baselines in Table~\ref{Tab:6dga}. Regarding the inference time, these accelerated models obtain significantly better inference speed compared to the original one. The variant with 50 steps already surpasses the 6-DoF GraspNet method (0.3991 seconds compared to 0.4216 seconds). Although the variant with 10 steps achieves the best detection speed, it is not recommended as its detection performance is severely compromised.



\subsection{Generalization Analysis}
\textbf{Cross-Dataset Transferability.}
Given the extensive scale and diversity of our Grasp-Anything-6D dataset, we expect that our proposed method, trained on this dataset, will exhibit strong generalization capabilities when tested on a distinct dataset. Specifically, we evaluate the language-driven grasp detection performance of models trained on Grasp-Anything-6D using the Contact-GraspNet dataset~\cite{sundermeyer2021contact}. This dataset comprises point cloud scenes of cluttered tabletops synthesized using objects and 6-DoF grasps from~\cite{eppner2021acronym} and a random camera view. We utilize the object category names as textual prompts for language-driven grasping. The findings showcased in Table~\ref{Tab:ctgn} exhibit a comparable trend to those observed in the Grasp-Anything-6D dataset. Our method continues to outperform its counterparts across all three metrics, with the version lacking negative prompt guidance following behind. Furthermore, the slight performance decrease on the new dataset is noteworthy. They underscore the efficacy of our dataset, as models trained on it demonstrate strong generalizability.


\textbf{Grasp Detection in the Wild.}
Figure~\ref{Fig:in the wild} illustrates results of our method in point cloud scenes captured from diverse real-world environments, such as working desks, bathrooms, and kitchens. As we can observe, the detected grasp poses exhibit satisfactory quality. This indicates that despite being trained on synthetic data, our approach effectively generalizes to real-world environments.

\vspace{-4ex}
\begin{figure}[ht]
    \centering
    \includegraphics[width=\linewidth]{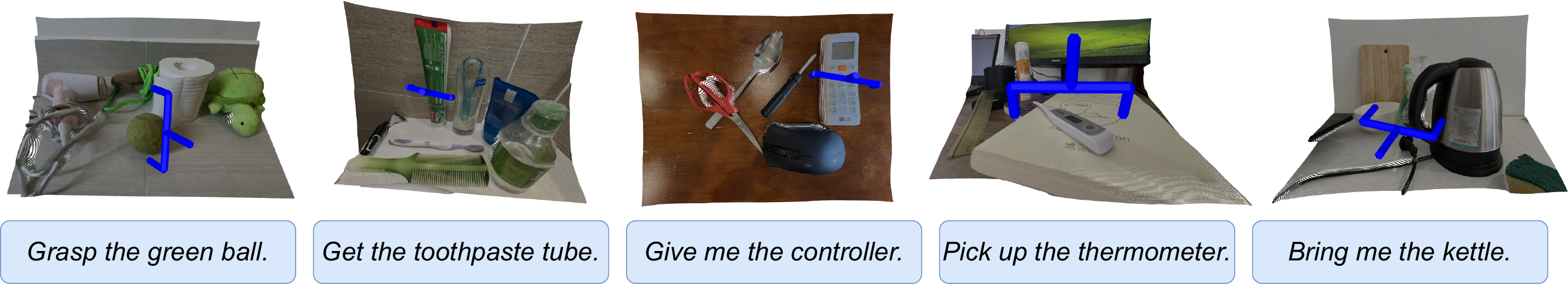}
    \vspace{-3ex}
    \caption{In the wild language-driven 6-DoF grasp detection results.}
    \label{Fig:in the wild}
\end{figure}
\vspace{-6ex}

\subsection{Negative Prompt Guidance Analysis}
We offer a more intuitive understanding of how negative prompt guidance influences the grasp detection results. Specifically, we ultimately sample 1000 grasp poses for each object in a given point cloud scene for both cases: our framework with negative prompt guidance and the one without it. We then employ t-SNE~\cite{van2008visualizing} to visualize all grasp poses on a 2D plane. The results are depicted in Figure~\ref{Fig:npg}, where grasp data points of the same color are detected for the same object. We can observe that negative prompt guidance significantly assists our method in discriminatively detecting grasp poses for different objects. Conversely, without negative prompt guidance, detecting grasp poses for one object is seriously confused by other ones. This further highlights the effectiveness of our proposed approach. More comparison results can be viewed in Figure~\ref{Fig:npg compare}.

\vspace{-3ex}
\begin{figure}[ht]
    \centering
        \begin{minipage}[b]{0.5\textwidth}
        \includegraphics[width=\linewidth]{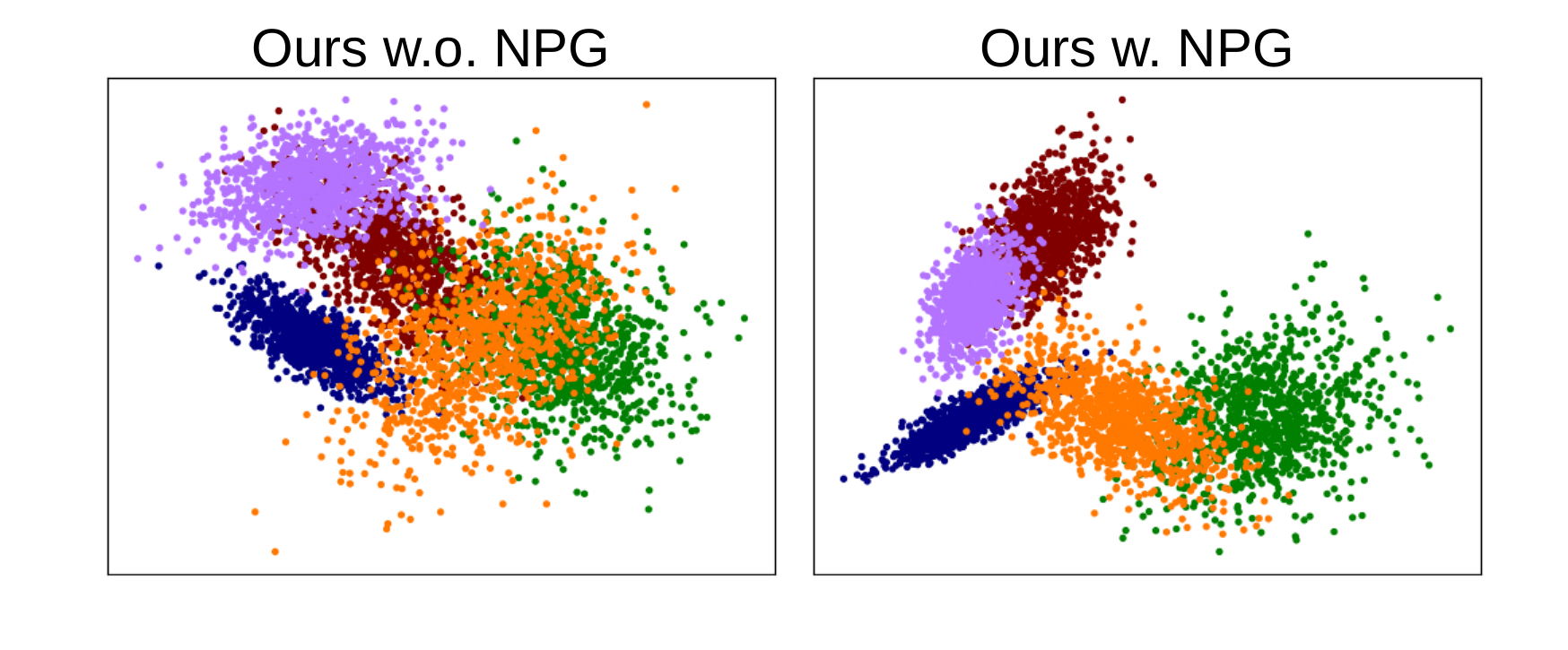}
        \vspace{-3ex}
        \caption{Negative prompt guidance analysis.}
        \label{Fig:npg}
    \end{minipage}%
    \hfill
    \begin{minipage}[b]{0.48\textwidth}
        \includegraphics[width=\linewidth]{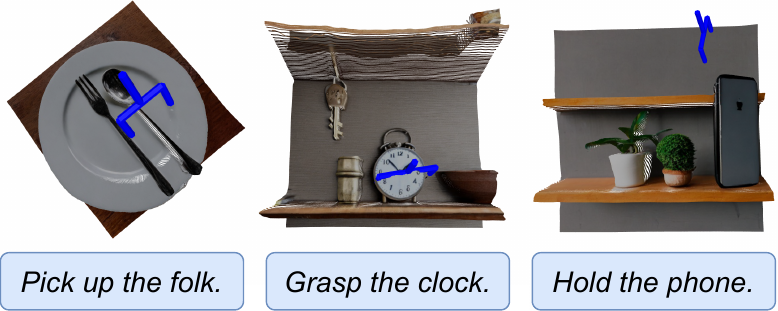}
        \vspace{-3ex}
        \caption{Failure cases.}
        \label{Fig:fail}
    \end{minipage}
\end{figure}
\vspace{-4ex}

\vspace{-4ex}
\begin{figure}[ht]
    \centering
    \includegraphics[width=0.9\linewidth]{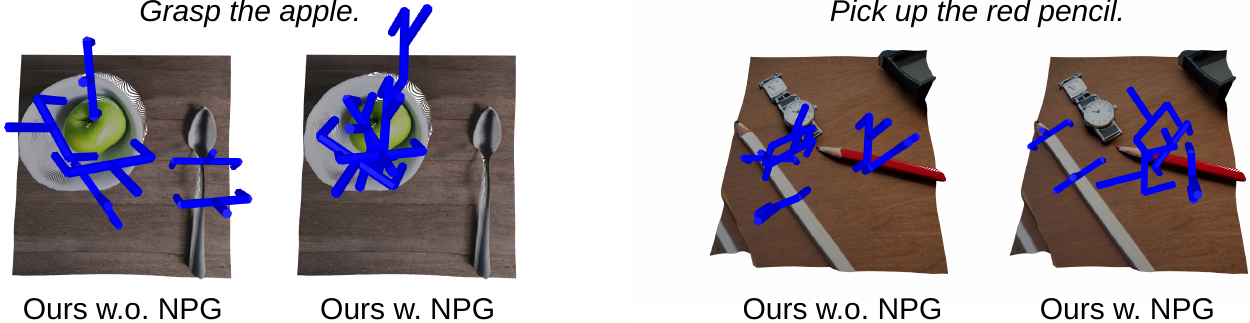}
    \vspace{-1ex}
    \caption{Comparisons between models with and without negative prompt guidance.}
    \label{Fig:npg compare}
\end{figure}


    



\subsection{Robotics Experiment}

\vspace{-4.5ex}
\begin{figure}[ht]
\centering
\def\svgwidth{1\columnwidth}
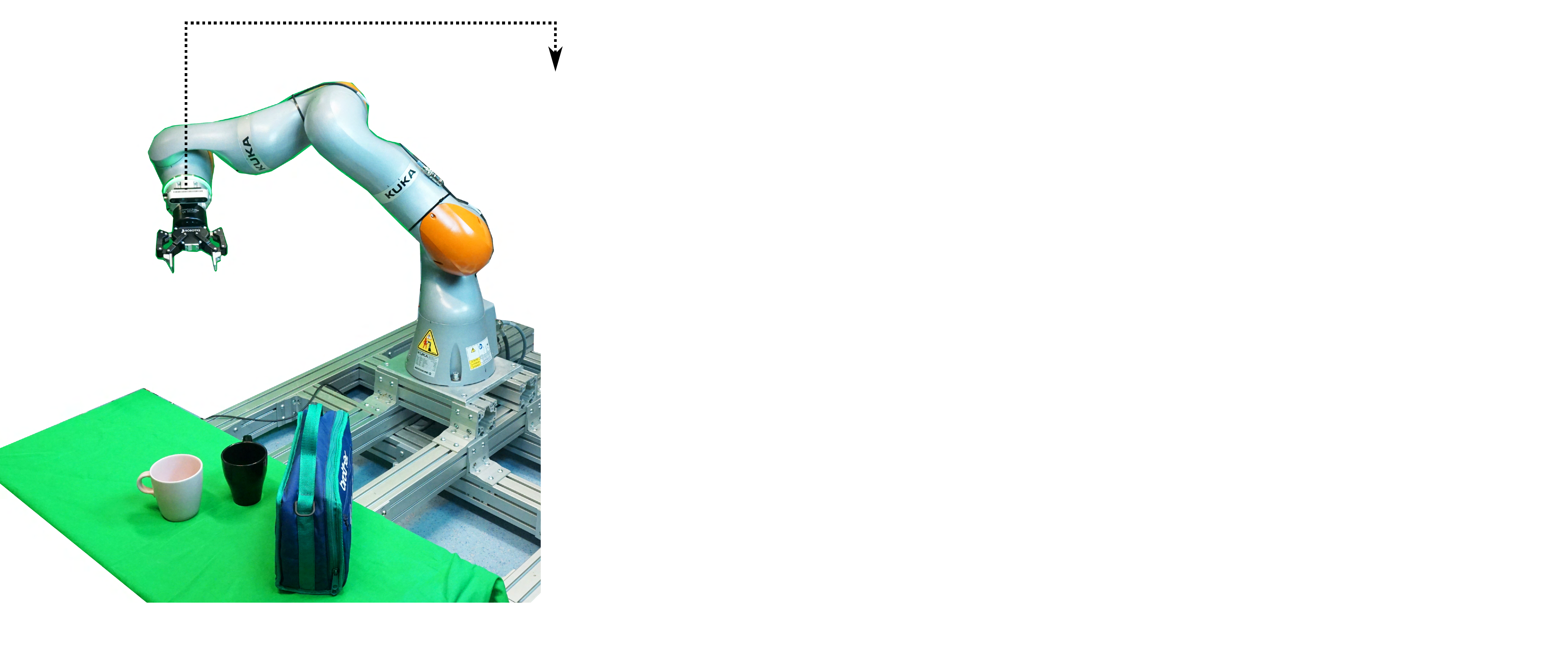
\vspace{-5ex}
\caption{(a) Experiment setup. (b) Example of the execution of a grasping task.}
\label{fig: robot demonstration}
\end{figure}
\vspace{-4ex}

\vspace{-5ex}
\begin{table}[ht]
\renewcommand\arraystretch{1.2}
\renewcommand{\tabcolsep}{4.5pt}
\small
\centering
\begin{tabular}{r|c|cc}\Xhline{1.0pt}
\bf{Baseline} & \textbf{Input Modality} & \textbf{Single} & \textbf{Cluttered} \\ \hline
GG-CNN~\cite{morrison2018closing} + CLIP~\cite{radford2021learning} & RGB-D &0.10  & 0.07 \\
CLIPORT~\cite{shridhar2022cliport} & RGB-D & 0.27 & 0.30 \\
Det-Seg-Refine~\cite{ainetter2021end} + CLIP~\cite{radford2021learning} & RGB-D &0.30  & 0.23\\ 
GR-ConvNet~\cite{kumra2020antipodal} + CLIP~\cite{radford2021learning} & RGB-D  &0.33  & 0.30\\ 
CLIP-Fusion~\cite{xu2023joint} & RGB-D & {0.40} & {0.40} \\
LGD~\cite{vuong2024language} & RGB-D & \textbf{0.43} & \textbf{0.42} \\ \hline
6-DoF GraspNet~\cite{mousavian20196} & Point clouds & 0.31 & 0.27  \\ 
SE(3)-DF ~\cite{urain2023se}  & Point clouds & 0.35 & 0.34 \\
3DAPNet~\cite{nguyen2023language}  & Point clouds & 0.36 & 0.34  \\ \hline
LGrasp6D (ours) w.o. NPG  & Point clouds & 0.38 & 0.36  \\ 
LGrasp6D (ours) & Point clouds & \textbf{0.43} & \textbf{0.42}  \\
\Xhline{1.0pt}
\end{tabular}
\vspace{1ex}
\caption{Robotic language-driven grasp detection results.}
\label{table: real-robot-language-driven}
\end{table}
\vspace{-6ex}

\textbf{Setup.} In Figure~\ref{fig: robot demonstration}, we present the robotic experiment conducted on a KUKA robot. The success rate is used for evaluation.
Using an Intel RealSense D435i depth camera, the detected 6-DoF grasp poses are mapped to robot's 6-DoF end-effector poses using transformation matrices obtained via hand-eye calibration \cite{nguyen2018covariance}. 
The trajectory planner and the computed torque controller~\cite{beck2023singularity,vu2023machine} are employed for the grasp execution.
We use two computers for the experiment. The first computer executes the real-time control software Beckhoff TwinCAT of \SI{8}{\kilo\hertz} update frequency, while the second one utilizes the Robot Operating System (ROS) for the camera and the Robotiq 2F-85 gripper. Using EtherCAT protocol, PC1 communicates with the robot via a network interface card (NIC). The inference process is performed on PC2, utilizing an NVIDIA RTX 3080 graphic card. Our assessment encompasses both single-object and cluttered scenarios, involving a diverse set of real-world daily objects. To ensure the reliability, we repeat each experiment for all methods a total of 45 times.

\textbf{Baselines.} Besides the baselines utilized in previous experiments, we additionally compare LGrasp6D with language-supported versions of state-of-the-art 2D grasp detectors, including GR-CNN~\cite{kumra2020antipodal}, Det-Seg-Refine~\cite{ainetter2021end}, GG-CNN~\cite{morrison2018closing}, CLIPORT~\cite{shridhar2022cliport}, CLIP-Fusion~\cite{xu2023joint}, and LGD~\cite{vuong2024language}. In all cases, we employ the pretrained CLIP ViT-B/32~\cite{radford2021learning} as the text encoder. The implementation details of all baselines can be found in our Supplementary Material.


\textbf{Results.} Our method, incorporating negative prompt guidance, demonstrates better performance compared to other baselines in Table~\ref{table: real-robot-language-driven}. Additionally, even though LGrasp6D is trained on Grasp-Anything-6D, a synthesis dataset exclusively created by foundation models, it still yields commendable results when applied to real-world objects.



\section{Discussion} \label{Sec:Discussion}
Despite promising results, it is important to acknowledge that our method still has limitations, as illustrated in Figure~\ref{Fig:fail}. The left case depicts an example of grasping the wrong object, while the middle one illustrates a detected grasp colliding with an object. The final case shows our method detecting a grasp that mis-targets the desired object. These underscore the challenges in language-driven 6-DoF grasping, indicating its need for further investigation.

For future research, we aim to enhance the performance by incorporating more advanced techniques to capture the intricate correlation among input modalities. In addition, our work can be extended to address language-driven 6-DoF grasping at both the part-level and task-level. For instance, instead of object-specific prompts like \textit{``Grasp the knife''}, one can provide more detailed prompts such as \textit{``Grasp the knife by its handle''} or \textit{"Grasp the knife for cutting''}.  Furthermore, it would be beneficial to extend our approach to accommodate different types of robot end-effectors to enhance the flexibility and adaptability of our framework. Lastly, integrating learning language-driven 6-DoF grasp detection with robotic control could create a more effective end-to-end pipeline, connecting human instructions directly to low-level robot actions.
\section{Conclusion} \label{Sec:Conclusion}
We address the task of language-driven 6-DoF grasp detection in cluttered point clouds. In particular, we have presented the Grasp-Anything-6D dataset as a large-scale dataset for the task with 1M point cloud scenes. We have introduced a novel LGrasp6D diffusion model incorporating the new concept of negative prompt guidance learning. Our proposed negative prompt guidance assists in tackling the fine-grained challenge of the language-driven grasp detection task, directing the detection process toward the desired object by steering away from undesired ones. Empirical results demonstrate the superiority of our method over other baselines in various settings. Furthermore, extensive experiments validate the efficacy of our approach in real-world environments and robotic applications. 

\clearpage

\newpage
\appendix
\section{Theoretical Findings}
\label{Sec: Theory}

\subsection{Proof of Proposition 1}
\begin{proof}

We have the following derivation:


\allowdisplaybreaks
\begin{align*}
    p\left ( \mathbf{g} | \mathbf{S},\mathbf{t},\neg\tilde{\mathbf{t}}\right )
     &= \frac{p\left ( \mathbf{g},\mathbf{S},\mathbf{t},\neg\tilde{\mathbf{t}}\right )}{p\left ( \mathbf{S},\mathbf{t},\neg\tilde{\mathbf{t}}\right )} \\
     &\propto p\left ( \mathbf{g},\mathbf{S},\mathbf{t},\neg\tilde{\mathbf{t}}\right ) && \text{$p \left ( \mathbf{S}, \mathbf{t}, \tilde{\mathbf{t}}\right )$ is a constant} \\
     &= p \left ( \neg\tilde{\mathbf{t}}|\mathbf{g},\mathbf{t},\mathbf{S}\right ) p \left ( \mathbf{g},\mathbf{t},\mathbf{S}\right ) \\
     &= p \left ( \neg\tilde{\mathbf{t}}|\mathbf{g}\right )p \left ( \mathbf{g},\mathbf{t},\mathbf{S}\right ) && \text{$\tilde{\textbf{t}}$, $\textbf{t}$, $\textbf{S}$ are independent} \\
     &= p\left ( \neg\tilde{\mathbf{t}}|\mathbf{g}\right )p\left (\mathbf{t}, \mathbf{S}|\mathbf{g} \right )p\left ( \mathbf{g}\right ) \\
     &= p\left ( \neg\tilde{\mathbf{t}}|\mathbf{g}\right )p\left (\mathbf{t}, \mathbf{S}|\mathbf{g} \right )\frac{p\left ( \mathbf{g}|\mathbf{S}\right )p\left (\mathbf{S} \right )}{p\left ( \mathbf{S}|\mathbf{g}\right )} && \text{Using Bayes' Theorem} \\
     &\propto p\left ( \mathbf{g}|\mathbf{S}\right )p\left (\mathbf{t}, \mathbf{S}|\mathbf{g} \right )\frac{p\left ( \neg\tilde{\mathbf{t}}|\mathbf{g}\right )}{p\left ( \mathbf{S}|\mathbf{g}\right )} && \text{$p \left ( \mathbf{S}\right )$ is a constant} \\
     &= p\left ( \mathbf{g}|\mathbf{S}\right )p\left (\mathbf{t}, \mathbf{S}|\mathbf{g} \right ) \frac{p\left ( \mathbf{g},\neg\tilde{\mathbf{t}}\right )}{p\left ( \mathbf{g}\right )p\left ( \mathbf{S}|\mathbf{g}\right )} \\
     &= p\left ( \mathbf{g}|\mathbf{S}\right )p\left (\mathbf{t}, \mathbf{S}|\mathbf{g} \right ) \frac{p\left ( \mathbf{g}\right )-p\left ( \mathbf{g},\tilde{\mathbf{t}}\right )}{p\left ( \mathbf{g}\right )p\left ( \mathbf{S}|\mathbf{g}\right )} \\
     &= p\left ( \mathbf{g}|\mathbf{S}\right )p\left (\mathbf{t}, \mathbf{S}|\mathbf{g} \right ) \frac{1-p\left ( \tilde{\mathbf{t}}|\mathbf{g}\right )}{p\left ( \mathbf{S}|\mathbf{g}\right )} \\
     &\propto p\left ( \mathbf{g}|\mathbf{S}\right )p\left (\mathbf{t}, \mathbf{S}|\mathbf{g} \right ) \frac{1}{p\left ( \mathbf{S}|\mathbf{g}\right ) p\left ( \tilde{\mathbf{t}}|\mathbf{g}\right )} \\
     &= p\left ( \mathbf{g}|\mathbf{S}\right )p\left (\mathbf{t}, \mathbf{S}|\mathbf{g} \right ) \frac{p \left ( \mathbf{g}\right )}{p\left ( \mathbf{S}|\mathbf{g}\right ) p \left ( \mathbf{g},\tilde{\mathbf{t}}\right )} \\
     &= p\left ( \mathbf{g}|\mathbf{S}\right )p\left (\mathbf{t}, \mathbf{S}|\mathbf{g} \right ) \frac{p \left ( \mathbf{g}\right )}{p\left ( \mathbf{S}|\mathbf{g},\tilde{\mathbf{t}}\right ) p \left ( \mathbf{g},\tilde{\mathbf{t}}\right )} && \text{$\tilde{\textbf{t}}$, $\textbf{t}$, $\textbf{S}$ are independent} \\
     &= p\left ( \mathbf{g}|\mathbf{S}\right )p\left (\mathbf{t}, \mathbf{S}|\mathbf{g} \right ) \frac{p \left ( \mathbf{g}\right )}{p\left ( \mathbf{S},\mathbf{g},\tilde{\mathbf{t}}\right )} \\
     &= p\left ( \mathbf{g}|\mathbf{S}\right )\frac{p\left (\mathbf{t}, \mathbf{S}|\mathbf{g} \right )}{p \left (\tilde{\mathbf{t}},\mathbf{S}|\mathbf{g} \right )} \\
     &= p\left ( \mathbf{g}|\mathbf{S}\right )\frac{p\left (\mathbf{g}| \mathbf{t}, \mathbf{S} \right ) p \left ( \mathbf{t}, \mathbf{S} \right )}{p \left (\mathbf{g}|\tilde{\mathbf{t}},\mathbf{S} \right )p\left ( \tilde{\mathbf{t}}, \mathbf{S}\right )} && \text{Using Bayes' Theorem} \\
     &\propto p\left ( \mathbf{g}|\mathbf{S}\right )\frac{p\left (\mathbf{g}| \mathbf{t}, \mathbf{S} \right )}{p \left (\mathbf{g}|\tilde{\mathbf{t}},\mathbf{S} \right )} && \text{$p \left ( \mathbf{t}, \mathbf{S} \right )$, $p \left ( \tilde{\mathbf{t}}, \mathbf{S} \right )$ are constants}
 \end{align*}
 
The assumption of independence between $\tilde{\textbf{t}}$, $\textbf{t}$, and $\textbf{S}$ reflects general real-world scenarios where human language prompts can be arbitrary and are not necessarily dependent on the scene. Proposition 1 is now proved. $\blacksquare$
\end{proof}

\subsection{Connection between Diffusion and Energy-Based Models}
The connection between diffusion and energy-based models is not restricted to our problem. We will recall this connection in the general context of any generation task.

\textbf{Diffusion Models.}
Denoising diffusion probabilistic models (DDPMs) construct a forward diffusion process by gradually adding Gaussian noise to the ground truth sample $\mathbf{x}_0$ through T timesteps. A neural network then learns to revert this noise perturbation process. Both the forward and the reverse processes are modeled as Markov chains:
\begin{equation}
    q\left (\mathbf{x}_{0:T} \right )=q(\mathbf{x}_0)\prod_{t=1}^Tq\left (\mathbf{x}_t|\mathbf{x}_{t-1} \right), \quad p_\theta\left (\mathbf{x}_{T:0} \right )=p\left ( \mathbf{x}_T \right )\prod_{t=T}^1p_\theta\left (\mathbf{x}_{t-1}|\mathbf{x}_t \right ),
\end{equation}
where $q \left ( \mathbf{x}_0\right )$ is the ground truth data distribution and $p \left ( \mathbf{x}_T\right )$ is a standard Gaussian prior $\mathcal{N}\left (\mathbf{0}, \mathbf{I} \right )$.

In the reverse process, each step is parameterized by a Gaussian distribution with mean $\bm{\mu}_{\theta} \left ( \mathbf{x}_t, t\right )$ and covariance matrix $\tilde{\beta}_t\mathbf{I}$, where $\tilde{\beta}_t=\beta_t\frac{1-\bar{\alpha}_{t-1}}{1-\bar{\alpha}_t}$. Following the simplification in~\cite{ho2020denoising}, we can keep the covariance fixed and formulate the reverse distribution as:
\begin{equation}
    p_\theta\left (\mathbf{x}_{t-1}|\mathbf{x}_t \right )
    = \mathcal{N}\left ( \frac{1}{\sqrt{\alpha_t}}\left (\mathbf{x}_t-\frac{1-\alpha_t}{\sqrt{1-\bar{\alpha}_t}}\bm{\epsilon}_\theta\left(\mathbf{x}_t, t \right ) \right ),\beta_t\mathbf{I} \right ).
\end{equation}
Subsequently, an individual step in sampling can be performed by:
\begin{equation}
    \mathbf{x}_{t-1} = \frac{1}{\sqrt{\alpha_t}}\left ( \mathbf{x}_t - \frac{1-\alpha_t}{\sqrt{1-\bar{\alpha}_t}}\bm{\epsilon}_{\bm{\theta}}\left ( \mathbf{x}_t, t \right ) \right ) + \sqrt{\beta_t}\mathbf{z},
    \label{Eq: ddpm}
\end{equation}
where $\mathbf{z} \sim \mathcal{N}\left (\mathbf{0}, \mathbf{I} \right )$ if the time step $t>1$, else $\mathbf{z} = \mathbf{0}$.

\textbf{Energy-Based Models.}
Energy-Based Models (EBMs)~\cite{du2019implicit,du2021improved,nijkamp2020anatomy,grathwohl2020learning} are a family of generative models in which the data distribution is modeled by an unnormalized probability density. Given a sample $\mathbf{x}\in\mathbb{R}^D$, its probability density is defined as:
\begin{equation}
    p_\theta \left ( \mathbf{x}\right ) \propto e^{-E_\theta\left ( \mathbf{x}\right )},
\end{equation}
where the energy function $E_\theta\left ( \mathbf{x} \right ):\mathbb{R}^D\rightarrow\mathbb{R}$ is a learnable neural network. Langevin dynamics~\cite{du2019implicit} is then used to sample from the unnormalized probability distribution to iteratively refine the generated sample $\mathbf{x}$:
\begin{equation}
    \mathbf{x}_t = \mathbf{x}_{t-1}-\frac{\lambda}{2} \nabla_{\mathbf{x}}E_{\theta}\left (\mathbf{x}_{t-1} \right ) + \sqrt{\lambda}\mathbf{z},
    \label{Eq: ebm}
\end{equation}
where $\lambda$ is the predefined step size and $\mathbf{z} \sim \mathcal{N}\left (\mathbf{0}, \mathbf{I} \right )$.

The sampling procedure used by diffusion models in Equation~\ref{Eq: ddpm} is functionally similar to the sampling procedure used by EBMs in Equation~\ref{Eq: ebm}. In both settings, samples are iteratively refined starting from Gaussian noise, with a small amount of noise removed at each iterative step. At a timestep $t$, in DDPMs, samples are updated using a learned denoising network $\bm{\epsilon}\left ( \mathbf{x}_t, t\right )$, while in EBMs, samples are updated via the gradient of the energy function $\nabla_{\mathbf{x}}E_{\theta}\left (\mathbf{x}_{t} \right ) \propto \nabla_{\mathbf{x}}\log{p_\theta}\left (\mathbf{x}_t \right )$. Thus, we can view a DDPM as an implicitly parameterized EBM and apply similar composition techniques for EBMs as in~\cite{du2020compositional} for DDPMs. More details about compositional DDPMs can be referred to in~\cite{liu2022compositional}.
\section{Remark on Related Works}
\label{Sec: Rw}
\textbf{Diffusion Models in Robotics.}
Recent years have witnessed diffusion models being applied to several robotic tasks. For instance, in policy learning, diffusion models have been employ for multi-task robotic manipulation~\cite{xian2023unifying}, long-horizon skill planning~\cite{mishra2023generative}, or cross-embodiment skill discovery~\cite{xu2023xskill}. Besides, the ability of diffusion models to generate realistic videos over a long horizon has enabled new applications in the context of robotics~\cite{ajay2024compositional,du2024learning,ko2024learning}. For example, Du~\textit{et al.}~\cite{du2024learning} proposed to learn universal planning strategy via text-to-video generation. In robot development, diffusion models have been leveraged for manipulator construction~\cite{xu2024dynamics} or soft robot co-design~\cite{wang2023diffusebot}. Although diffusion models have also been explored for the task of grasp detection~\cite{urain2023se,nguyen2023language}, none of them address the task of detecting language-driven 6-DoF grasp poses in 3D cluttered scenes.

\textbf{Language-Driven Grasp Detection.}
Language-driven grasp detection has emerged as an active research domain in recent years. Previous works have primarily focused on addressing this task using 2D images~\cite{tang2023graspgpt,tziafas2023language,vuong2023grasp,xu2023joint,vuong2024language}. For instance, the authors in~\cite{tang2023ttask} presented a method that combines object grounding and task grounding to tackle the task of task-oriented grasp detection, while Xu~\textit{et al.}~\cite{xu2023joint} proposed to jointly model vision, language, and action for grasping in clutter. Despite achieving promising results, these approaches are limited in their ability to handle complex 3D environments. To overcome this limitation, recent research has explored language-driven grasp detection in 3D data. In particular, Nguyen~\textit{et al.}~\cite{nguyen2023language} addressed the task of affordance-guided grasp detection for 3D point cloud objects, while Tang~\textit{et al.}~\cite{tang2023graspgpt} leveraged knowledge from large language models for task-oriented grasping. However, these methods are designed for single-object scenarios, limiting their applicability in cluttered settings. In contrast, our method is capable of detecting language-driven 6-DoF grasp poses in cluttered point cloud scenes.
\section{Dataset Statistics}\label{Sec: Stats}
Table~\ref{tab:stats} shows our dataset statistics and comparisons to other 6-DoF grasp datasets.

\begin{table}[ht]
    \centering
    \resizebox{\linewidth}{!}{
    \begin{tabular}{l|c|c|c|c|c|c|c}
    \Xhline{1.0pt}
        Dataset & Text? & \#objects & \#grasps & \#scenes & Cluttered? & Data type & Annotation \\
        \hline
        GraspNet-1B~\cite{fang2020graspnet} & \ding{55} & 88 & $\sim$1.2B & 97K & \checkmark & Real & Analysis \\
        6-DoF GraspNet~\cite{mousavian20196} & \ding{55} & 206 & $\sim$7M & 206 & \ding{55} & Sim. & Sim. \\
        ACRONYM~\cite{eppner2021acronym} & \ding{55}  & 8872 & $\sim$17.7M & - & \ding{55} & Sim. & Sim. \\
        \hline
        \bf{Ours} & \checkmark & $\sim$3M & $\sim$200M & 1M & \checkmark & Synth. & Analysis \\
        \Xhline{1.0pt}
    \end{tabular}
    }
    \vspace{1ex}
    \caption{Dataset statistics.}
    \label{tab:stats}
\end{table}
\section{Implementation Details}
\label{Sec: Impl}
\subsection{Grasp Detection Methods for 3D Point Clouds}
\begin{itemize}
\item Our LGrasp6D: The text embedding $\mathbf{t}$ produced by the pretrained CLIP ViT-B/32 and the negative prompt embedding $\tilde{\mathbf{t}}$ are 512-dimensional (512-D). We employ a PointNet++~\cite{ni2020pointnet++} architecture for our scene encoder. The number of points per scene is 8192. The scene encoder extracts $n_S=128$ scene tokens of 256-D. We employ 4 heads for the multi-head cross-attention block, with the output of 512-D. The timestep $t$ is encoded by a sinusoidal positional encoder to obtain a 64-D vector. To speed up the training process, we freeze the scene encoder after the first 100 epochs.

\item 6-DoF GraspNet: We modified the model to integrate the text embedding derived from the CLIP text encoder~\cite{radford2021learning} into both the encoder and decoder of the variational autoencoder. Since our dataset does not include negative grasp poses, we refrained from employing additional refinement steps. This is also to ensure a fair comparison with other methods. The remaining architecture, hyperparameters, and training loss are inherited from the original work.

\item SE(3)-DF~\cite{urain2023se}: We append the text embedding extracted by the CLIP text encoder~\cite{radford2021learning} to the input of the feature encoder. As the signed distance function is not available for our 3D point clouds, we exclude the signed distance function learning objective from the framework. The remaining architecture, hyperparameters, and training loss are retained from the original work.

\item 3DAPNet~\cite{nguyen2023language}: 3DAPNet jointly addresses the tasks of language-guided affordance detection and pose detection. To adapt this method to our problem, we remove the affordance learning objective from the original framework. The remaining architecture, hyperparameters, and training loss are inherited from the original work.
\end{itemize}

\subsection{Grasp Detection Methods for Images}
Methods in this section are used in our robotic experiment in Section 5.2 of our main paper. They are trained on the RGB-D images to predict rectangle grasp poses inherited from Grasp-Anything~\cite{vuong2023grasp}. Specifically, each grasp pose is represented by $\left (g_x,g_y,g_w,g_h,g_\theta \right )$, where $(g_x, g_y)$ is the center of the rectangle, $(g_w, g_h)$ are the width and height of the rectangle and $g_\theta$ is the grasp angle.
\begin{itemize}
\item Language-supported versions of GG-CNN~\cite{morrison2018closing}, Det-Seg-Refine~\cite{ainetter2021end}, and GR-ConvNet~\cite{kumra2020antipodal}: We slightly modify these baselines by adding a component to fuse the input image and text prompt. Specifically, we utilize the CLIP text encoder~\cite{radford2021learning} to extract the text embedding. Additionally, we employ the ALBEF architecture presented in~\cite{li2021align} to fuse the text embedding and the visual features. The remaining training loss and architecture are inherited from the original works.

\item CLIPORT~\cite{shridhar2022cliport}: The original CLIPORT framework learns a policy $\pi$, which is not directly applicable to our setting. Therefore, we modify its architecture's final layers by adding an MLP to output the rectangle grasp pose.

\item CLIP-Fusion~\cite{xu2023joint}: We follow the cross-attention module in CLIP-Fusion. The final MLP in the architecture is modified to output five parameters of the rectangle grasp pose.

\item LGD~\cite{vuong2024language}: We report results from the original paper.

\end{itemize}
\section{Ablation Studies}
\label{Sec: abl}

\textbf{Negative Guidance Scale.} Recall that the negative guidance scale $w$ plays an important role in controlling the strength of the negative guidance in the sampling process. We conduct an ablation study of the effect of the change in $w$ on the grasp detection performance. Table~\ref{Tab:w} demonstrates that values of $w=0.2$ (used in experiments in the main paper) and $w=0.5$ yield the best results, whereas excessively small or large values of $w$ detrimentally affect performance.

\begin{table}[ht]
\renewcommand\arraystretch{1.2}
\renewcommand{\tabcolsep}{6.5pt}
\small
\centering
\begin{tabular}{c|ccc}\Xhline{1.0pt}
$w$ & \textbf{CR}$\uparrow$ & \textbf{EMD}$\downarrow$ & \textbf{CFR}$\uparrow$ \\ \hline
0.1 & 0.6573 & 0.4183 & 0.7629 \\ 
0.2 & \bf{0.6649} & \underline{0.4013} & \bf{0.7706} \\ 
0.5 & \underline{0.6607} & \bf{0.4005} & \underline{0.7698} \\ 
1.0 & 0.6531 & 0.4310 & 0.7622\\ 
2.0 & 0.6372 & 0.4521 & 0.7563 \\ 
\Xhline{1.0pt}
\end{tabular}
\vspace{1ex}
\caption{Grasp detection performance with varying negative guidance scale.}
\label{Tab:w}
\end{table}

\textbf{Loss Function.}
Table~\ref{tab:loss} shows the performances when using varying ratios of $\mathcal{L}_{\text{negative}}$ (called $\zeta$) and $\mathcal{L}_{\text{noise}}$ (which is $1-\zeta$). The results indicate that setting $\zeta$ to $0.1$ or $0.2$ yields strong accuracy, while either too high ($0.4$) or low ($0.05$) values significantly hurt the performance.

\begin{table}[ht]

\renewcommand\arraystretch{1.2}
\renewcommand{\tabcolsep}{6.5pt}
\small
\centering
\begin{tabular}{c|ccc}
    \hline
$\zeta$ & \textbf{CR}$\uparrow$ & \textbf{EMD}$\downarrow$ & \textbf{CFR}$\uparrow$ \\ \hline
0.05 & {0.6237} & {0.4500} & {0.7420} \\
0.1 & \bf{0.6733} & \bf{0.4029} & \underline{0.7754} \\ 
0.2 & \underline{0.6664} & \underline{0.4093} & \bf{0.7812} \\
0.4 & {0.5833} & {0.5298} & {0.7326} \\
\hline
    \end{tabular}
    \vspace{1ex}
    \caption{Loss function analysis.}
    \label{tab:loss}
\end{table}

\textbf{Backbone Variation.} We conduct an ablation study on two different scene encoder backbone, i.e., PointNet++~\cite{qi2017pointnet++} and Point Transformer~\cite{zhao2021point}, and two different pretrained text encoders, i.e., CLIP ViT-B/32~\cite{radford2021learning} and BERT~\cite{devlin2018bert}. The number of parameters and results of all variants are shown in Table~\ref{Tab:bb}. We observe that in general, PointNet++ performs better than Point Transformer, and CLIP performs better than BERT. Variants using Point Transformer run significantly slower than those using PointNet++ due to the larger and more complicated architecture. Particularly, the combination of Point Transformer and CLIP obtains a competitive grasp detection performance compared to that of PointNet++ and CLIP; however, its inference time is considerably higher. This pattern is also observed when comparing CLIP and BERT text encoders.  
The gap in grasp detection performance between variants utilizing the CLIP ViT-B/32 text encoder and those employing BERT is substantial, highlighting CLIP's superiority in semantic language-vision understanding.

\begin{table}[ht]
\renewcommand\arraystretch{1.2}
\renewcommand{\tabcolsep}{4.5pt}
\small
\centering
\resizebox{\textwidth}{!}{%
\begin{tabular}{r|c|cccc}\Xhline{1.0pt}
\textbf{Scene Encoder} & \textbf{Text Encoder} & \textbf{CR}$\uparrow$ & \textbf{EMD}$\downarrow$ & \textbf{CFR}$\uparrow$ & \textbf{IT}$\downarrow$ \\ \hline
Point Transformer~\cite{zhao2021point} (23M) & BERT~\cite{devlin2018bert} (110M) & 0.6428 & 0.4597 & 0.7583 & 2.0137 \\ 
Point Transformer~\cite{zhao2021point} (23M) & CLIP~\cite{radford2021learning} (63M) & \underline{0.6591} & \underline{0.4167} & \bf{0.7725} & 1.9755 \\ 
PointNet++~\cite{qi2017pointnet++} (2M) & BERT~\cite{devlin2018bert} (110M) & 0.6430 & 0.4225 & 0.7622 & \underline{1.5449} \\ 
PointNet++~\cite{qi2017pointnet++} (2M) & CLIP~\cite{radford2021learning} (63M)& \bf{0.6649} & \bf{0.4013} & \underline{0.7706} & \bf{1.4832} \\ 
\Xhline{1.0pt}
\end{tabular}
}
\vspace{1ex}
\caption{Scene encoder and text encoder backbone variation.}
\label{Tab:bb}
\end{table}
\vspace{-4ex}
\section{Robotic Experiments}
\label{Sec: Robot}
We show 20 real-world daily objects used in robotic experiments in Figure~\ref{Fig: Robot Obj}. The sequences of actions when the KUKA robot grasps objects are presented in Figure~\ref{Fig: Robot Seq}. Figure~\ref{Fig: Robot Det} further shows the detection result of our LGrasp6D on point clouds captured by a RealSense camera mounted on the robot. The robotic experiments demonstrate that although our method is trained on a synthetic Grasp-Anything-6D dataset, it can generalize to detect grasp poses in real-world scenarios. More illustrations can be found in our Demonstration Video.

\begin{figure}[h]
    \centering
    \includegraphics[width=\textwidth]{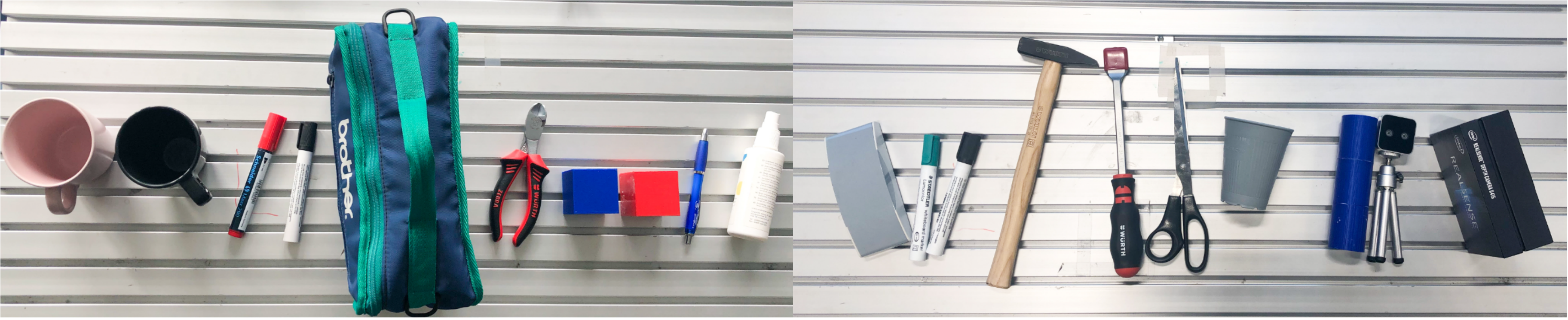}
    \caption{Set of 20 objects used in the robotic experiments.}
    \vspace{-3ex}
    \label{Fig: Robot Obj}
\end{figure}

\begin{figure}[ht]
    \centering
    \includegraphics[width=\textwidth]{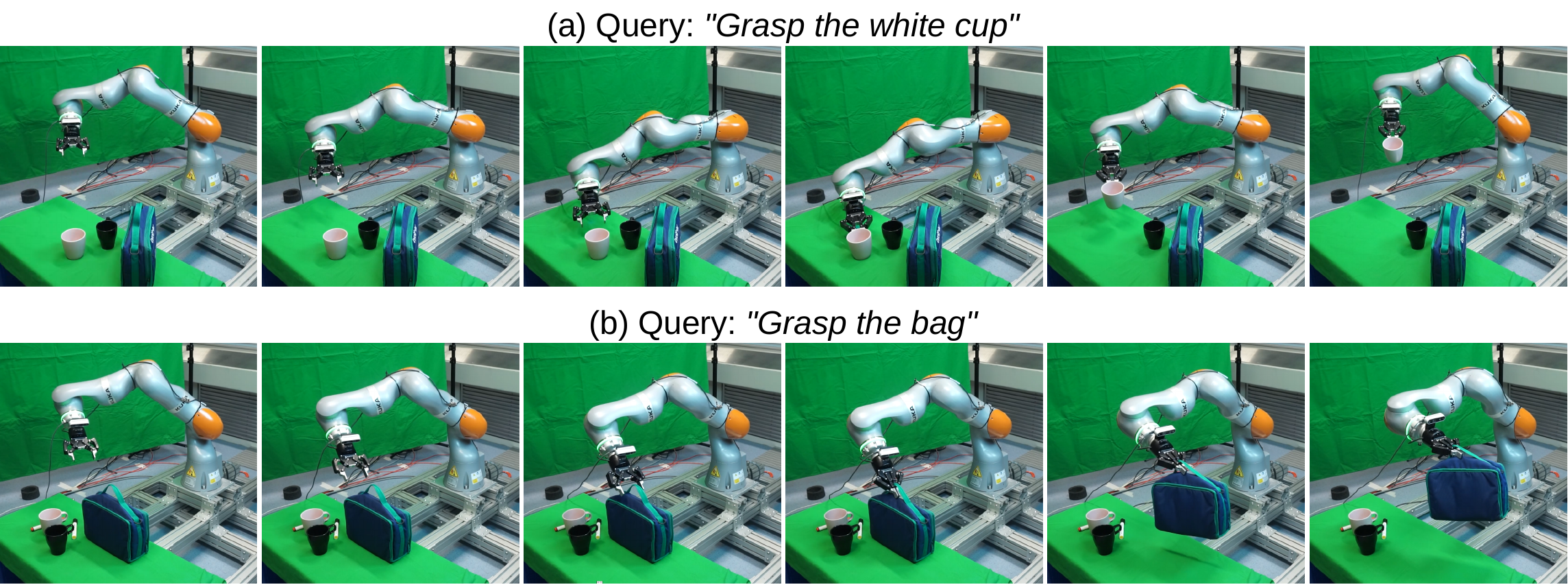}
    \vspace{-5ex}
    \caption{Snapshots of two example robotic experiments.}
    \label{Fig: Robot Seq}
\end{figure}
\vspace{-7ex}

\begin{figure}[ht]
    \centering
    \includegraphics[height=3.4cm]{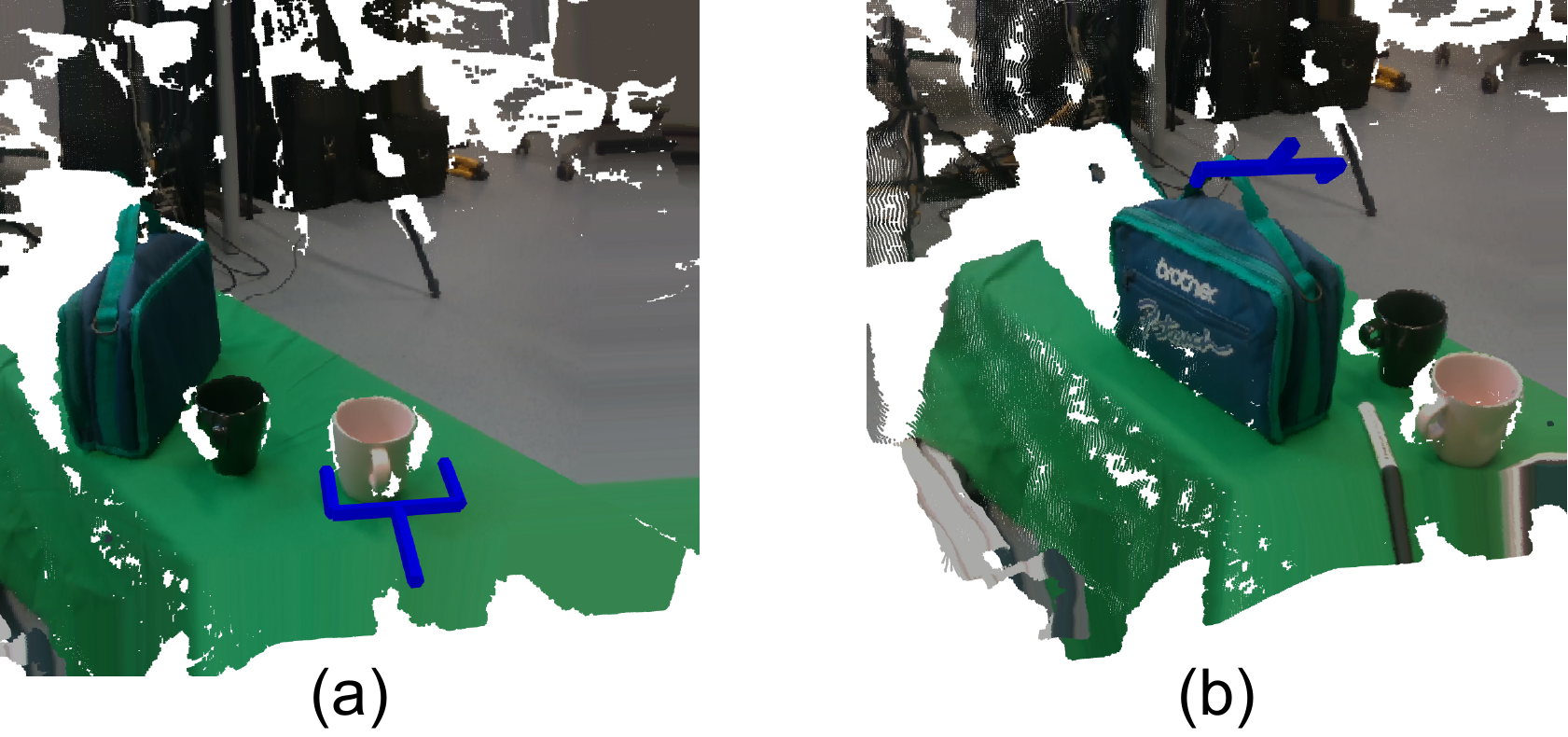}
    \vspace{-2ex}
    \caption{Detection results in robotic experiments. Point clouds are captured from a RealSense camera with experiments in Figure~\ref{Fig: Robot Seq}.}
    \label{Fig: Robot Det}
\end{figure}


\section{Additional Qualitative Results}
\label{Sec: Qual}
Figure~\ref{Fig:Add_Qual} shows more qualitative results to demonstrate the effectiveness of our method in detecting grasp poses for different objects.

\begin{figure}[hbt!]
    \vspace{-2ex}
    \centering
    \includegraphics[width=\textwidth]{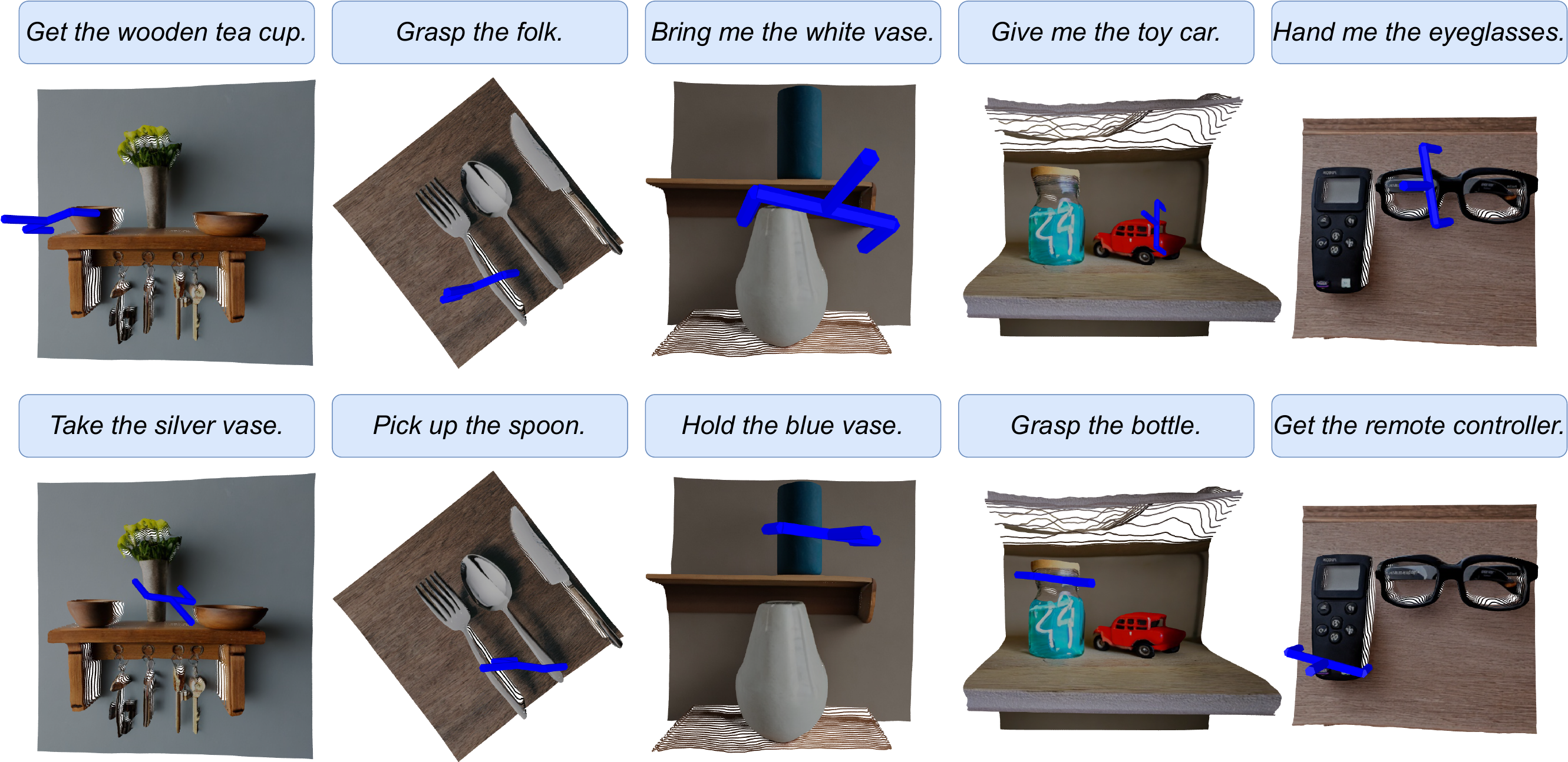}
    \vspace{-4ex}
    \caption{Additional qualitative results.}
    \label{Fig:Add_Qual}
\end{figure}

%
%
\bibliographystyle{ieee_fullname}
\bibliography{main}

\begin{thebibliography}{10}\itemsep=-1pt

\bibitem{ainetter2021end}
Stefan Ainetter and Friedrich Fraundorfer.
\newblock End-to-end trainable deep neural network for robotic grasp detection and semantic segmentation from rgb.
\newblock In {\em ICRA}, 2021.

\bibitem{ajay2024compositional}
Anurag Ajay, Seungwook Han, Yilun Du, Shuang Li, Abhi Gupta, Tommi Jaakkola, Josh Tenenbaum, Leslie Kaelbling, Akash Srivastava, and Pulkit Agrawal.
\newblock Compositional foundation models for hierarchical planning.
\newblock {\em NeurIPS}, 2024.

\bibitem{austin2021structured}
Jacob Austin, Daniel~D Johnson, Jonathan Ho, Daniel Tarlow, and Rianne Van Den~Berg.
\newblock Structured denoising diffusion models in discrete state-spaces.
\newblock {\em NeurIPS}, 2021.

\bibitem{beck2023singularity}
Florian Beck, Minh~Nhat Vu, Christian Hartl-Nesic, and Andreas Kugi.
\newblock Singularity avoidance with application to online trajectory optimization for serial manipulators.
\newblock {\em IFAC-PapersOnLine}, 2023.

\bibitem{bhat2023zoedepth}
Shariq~Farooq Bhat, Reiner Birkl, Diana Wofk, Peter Wonka, and Matthias M{\"u}ller.
\newblock Zoedepth: Zero-shot transfer by combining relative and metric depth.
\newblock {\em arXiv preprint arXiv:2302.12288}, 2023.

\bibitem{brohan2022rt}
Anthony Brohan, Noah Brown, Justice Carbajal, Yevgen Chebotar, Joseph Dabis, Chelsea Finn, Keerthana Gopalakrishnan, Karol Hausman, Alex Herzog, Jasmine Hsu, et~al.
\newblock Rt-1: Robotics transformer for real-world control at scale.
\newblock In {\em RSS}, 2023.

\bibitem{brohan2023can}
Anthony Brohan, Yevgen Chebotar, Chelsea Finn, Karol Hausman, Alexander Herzog, Daniel Ho, Julian Ibarz, Alex Irpan, Eric Jang, Ryan Julian, et~al.
\newblock Do as i can, not as i say: Grounding language in robotic affordances.
\newblock In {\em CoRL}, 2023.

\bibitem{brown2020language}
Tom Brown, Benjamin Mann, Nick Ryder, Melanie Subbiah, Jared~D Kaplan, Prafulla Dhariwal, Arvind Neelakantan, Pranav Shyam, Girish Sastry, Amanda Askell, et~al.
\newblock Language models are few-shot learners.
\newblock {\em NeurIPS}, 2020.

\bibitem{caldera2018review}
Shehan Caldera, Alexander Rassau, and Douglas Chai.
\newblock Review of deep learning methods in robotic grasp detection.
\newblock {\em Multimodal Technologies and Interaction}, 2018.

\bibitem{chi2023diffusion}
Cheng Chi, Siyuan Feng, Yilun Du, Zhenjia Xu, Eric Cousineau, Benjamin Burchfiel, and Shuran Song.
\newblock Diffusion policy: Visuomotor policy learning via action diffusion.
\newblock In {\em RSS}, 2023.

\bibitem{chowdhery2023palm}
Aakanksha Chowdhery, Sharan Narang, Jacob Devlin, Maarten Bosma, Gaurav Mishra, Adam Roberts, Paul Barham, Hyung~Won Chung, Charles Sutton, Sebastian Gehrmann, et~al.
\newblock Palm: Scaling language modeling with pathways.
\newblock {\em JMLR}, 2023.

\bibitem{devlin2018bert}
Jacob Devlin, Ming-Wei Chang, Kenton Lee, and Kristina Toutanova.
\newblock Bert: Pre-training of deep bidirectional transformers for language understanding.
\newblock In {\em NAACL}, 2019.

\bibitem{dhariwal2021diffusion}
Prafulla Dhariwal and Alexander Nichol.
\newblock Diffusion models beat gans on image synthesis.
\newblock {\em NeurIPS}, 2021.

\bibitem{driess2023palm}
Danny Driess, Fei Xia, Mehdi~SM Sajjadi, Corey Lynch, Aakanksha Chowdhery, Brian Ichter, Ayzaan Wahid, Jonathan Tompson, Quan Vuong, Tianhe Yu, et~al.
\newblock Palm-e: An embodied multimodal language model.
\newblock In {\em ICML}, 2023.

\bibitem{du2020compositional}
Yilun Du, Shuang Li, and Igor Mordatch.
\newblock Compositional visual generation with energy based models.
\newblock {\em NeurIPS}, 2020.

\bibitem{du2021improved}
Yilun Du, Shuang Li, Joshua Tenenbaum, and Igor Mordatch.
\newblock Improved contrastive divergence training of energy-based models.
\newblock In {\em ICML}, 2021.

\bibitem{du2019implicit}
Yilun Du and Igor Mordatch.
\newblock Implicit generation and modeling with energy based models.
\newblock {\em NeurIPS}, 2019.

\bibitem{du2024learning}
Yilun Du, Sherry Yang, Bo Dai, Hanjun Dai, Ofir Nachum, Josh Tenenbaum, Dale Schuurmans, and Pieter Abbeel.
\newblock Learning universal policies via text-guided video generation.
\newblock {\em NeurIPS}, 2024.

\bibitem{eppner2021acronym}
Clemens Eppner, Arsalan Mousavian, and Dieter Fox.
\newblock Acronym: A large-scale grasp dataset based on simulation.
\newblock In {\em ICRA}, 2021.

\bibitem{fang2020graspnet}
Hao-Shu Fang, Chenxi Wang, Minghao Gou, and Cewu Lu.
\newblock Graspnet-1billion: A large-scale benchmark for general object grasping.
\newblock In {\em CVPR}, 2020.

\bibitem{garg2022lisa}
Divyansh Garg, Skanda Vaidyanath, Kuno Kim, Jiaming Song, and Stefano Ermon.
\newblock Lisa: Learning interpretable skill abstractions from language.
\newblock {\em NeurIPS}, 2022.

\bibitem{gou2021rgb}
Minghao Gou, Hao-Shu Fang, Zhanda Zhu, Sheng Xu, Chenxi Wang, and Cewu Lu.
\newblock Rgb matters: Learning 7-dof grasp poses on monocular rgbd images.
\newblock In {\em ICRA}, 2021.

\bibitem{grathwohl2020learning}
Will Grathwohl, Kuan-Chieh Wang, J{\"o}rn-Henrik Jacobsen, David Duvenaud, and Richard Zemel.
\newblock Learning the stein discrepancy for training and evaluating energy-based models without sampling.
\newblock In {\em ICML}, 2020.

\bibitem{ho2020denoising}
Jonathan Ho, Ajay Jain, and Pieter Abbeel.
\newblock Denoising diffusion probabilistic models.
\newblock {\em NeurIPS}, 2020.

\bibitem{jiang2011efficient}
Yun Jiang, Stephen Moseson, and Ashutosh Saxena.
\newblock Efficient grasping from rgbd images: Learning using a new rectangle representation.
\newblock In {\em ICRA}, 2011.

\bibitem{kerr2023lerf}
Justin Kerr, Chung~Min Kim, Ken Goldberg, Angjoo Kanazawa, and Matthew Tancik.
\newblock Lerf: Language embedded radiance fields.
\newblock In {\em ICCV}, 2023.

\bibitem{kingma2014adam}
Diederik~P Kingma and Jimmy Ba.
\newblock Adam: A method for stochastic optimization.
\newblock In {\em ICLR}, 2015.

\bibitem{kleeberger2020survey}
Kilian Kleeberger, Richard Bormann, Werner Kraus, and Marco~F Huber.
\newblock A survey on learning-based robotic grasping.
\newblock {\em Current Robotics Reports}, 2020.

\bibitem{ko2024learning}
Po-Chen Ko, Jiayuan Mao, Yilun Du, Shao-Hua Sun, and Joshua~B. Tenenbaum.
\newblock Learning to act from actionless videos through dense correspondences.
\newblock In {\em ICLR}, 2024.

\bibitem{kumra2020antipodal}
Sulabh Kumra, Shirin Joshi, and Ferat Sahin.
\newblock Antipodal robotic grasping using generative residual convolutional neural network.
\newblock In {\em IROS}, 2020.

\bibitem{lee2024posediff}
Seoyoung Lee and Joonseok Lee.
\newblock Posediff: Pose-conditioned multimodal diffusion model for unbounded scene synthesis from sparse inputs.
\newblock In {\em WACV}, 2024.

\bibitem{lenz2015deep}
Ian Lenz, Honglak Lee, and Ashutosh Saxena.
\newblock Deep learning for detecting robotic grasps.
\newblock {\em IJRR}, 2015.

\bibitem{li2022blip}
Junnan Li, Dongxu Li, Caiming Xiong, and Steven Hoi.
\newblock Blip: Bootstrapping language-image pre-training for unified vision-language understanding and generation.
\newblock In {\em ICML}, 2022.

\bibitem{li2021align}
Junnan Li, Ramprasaath Selvaraju, Akhilesh Gotmare, Shafiq Joty, Caiming Xiong, and Steven Chu~Hong Hoi.
\newblock Align before fuse: Vision and language representation learning with momentum distillation.
\newblock {\em NeurIPS}, 2021.

\bibitem{li2022grounded}
Liunian~Harold Li, Pengchuan Zhang, Haotian Zhang, Jianwei Yang, Chunyuan Li, Yiwu Zhong, Lijuan Wang, Lu Yuan, Lei Zhang, Jenq-Neng Hwang, et~al.
\newblock Grounded language-image pre-training.
\newblock In {\em CVPR}, 2022.

\bibitem{liang2019pointnetgpd}
Hongzhuo Liang, Xiaojian Ma, Shuang Li, Michael G{\"o}rner, Song Tang, Bin Fang, Fuchun Sun, and Jianwei Zhang.
\newblock Pointnetgpd: Detecting grasp configurations from point sets.
\newblock In {\em ICRA}, 2019.

\bibitem{liu2022compositional}
Nan Liu, Shuang Li, Yilun Du, Antonio Torralba, and Joshua~B Tenenbaum.
\newblock Compositional visual generation with composable diffusion models.
\newblock In {\em ECCV}, 2022.

\bibitem{liu2022structdiffusion}
Weiyu Liu, Tucker Hermans, Sonia Chernova, and Chris Paxton.
\newblock Structdiffusion: Object-centric diffusion for semantic rearrangement of novel objects.
\newblock In {\em CoRL Workshop}, 2022.

\bibitem{luo2021diffusion}
Shitong Luo and Wei Hu.
\newblock Diffusion probabilistic models for 3d point cloud generation.
\newblock In {\em CVPR}, 2021.

\bibitem{mishra2023generative}
Utkarsh~Aashu Mishra, Shangjie Xue, Yongxin Chen, and Danfei Xu.
\newblock Generative skill chaining: Long-horizon skill planning with diffusion models.
\newblock In {\em CoRL}, 2023.

\bibitem{morrison2018closing}
Douglas Morrison, Peter Corke, and J{\"u}rgen Leitner.
\newblock Closing the loop for robotic grasping: A real-time, generative grasp synthesis approach.
\newblock In {\em RSS}, 2018.

\bibitem{mousavian20196}
Arsalan Mousavian, Clemens Eppner, and Dieter Fox.
\newblock 6-dof graspnet: Variational grasp generation for object manipulation.
\newblock In {\em ICCV}, 2019.

\bibitem{murali20206}
Adithyavairavan Murali, Arsalan Mousavian, Clemens Eppner, Chris Paxton, and Dieter Fox.
\newblock 6-dof grasping for target-driven object manipulation in clutter.
\newblock In {\em ICRA}, 2020.

\bibitem{nakayama2023difffacto}
George~Kiyohiro Nakayama, Mikaela~Angelina Uy, Jiahui Huang, Shi-Min Hu, Ke Li, and Leonidas Guibas.
\newblock Difffacto: Controllable part-based 3d point cloud generation with cross diffusion.
\newblock In {\em ICCV}, 2023.

\bibitem{nguyen2018covariance}
Huy Nguyen and Quang-Cuong Pham.
\newblock {On the \textbf{Covariance} of $\mathbf{X}$ in $\mathbf{AX}= \mathbf{XB}$}.
\newblock {\em T-RO}, 2018.

\bibitem{nguyen2023language}
Toan Nguyen, Minh~Nhat Vu, Baoru Huang, Tuan Van~Vo, Vy Truong, Ngan Le, Thieu Vo, Bac Le, and Anh Nguyen.
\newblock Language-conditioned affordance-pose detection in 3d point clouds.
\newblock In {\em ICRA}, 2024.

\bibitem{nguyen2023open}
Toan Nguyen, Minh~Nhat Vu, An Vuong, Dzung Nguyen, Thieu Vo, Ngan Le, and Anh Nguyen.
\newblock Open-vocabulary affordance detection in 3d point clouds.
\newblock In {\em IROS}, 2023.

\bibitem{ni2020pointnet++}
Peiyuan Ni, Wenguang Zhang, Xiaoxiao Zhu, and Qixin Cao.
\newblock Pointnet++ grasping: Learning an end-to-end spatial grasp generation algorithm from sparse point clouds.
\newblock In {\em ICRA}, 2020.

\bibitem{nichol2021glide}
Alex Nichol, Prafulla Dhariwal, Aditya Ramesh, Pranav Shyam, Pamela Mishkin, Bob McGrew, Ilya Sutskever, and Mark Chen.
\newblock Glide: Towards photorealistic image generation and editing with text-guided diffusion models.
\newblock In {\em ICML}, 2022.

\bibitem{nichol2021improved}
Alexander~Quinn Nichol and Prafulla Dhariwal.
\newblock Improved denoising diffusion probabilistic models.
\newblock In {\em ICML}, 2021.

\bibitem{nijkamp2020anatomy}
Erik Nijkamp, Mitch Hill, Tian Han, Song-Chun Zhu, and Ying~Nian Wu.
\newblock On the anatomy of mcmc-based maximum likelihood learning of energy-based models.
\newblock In {\em AAAI}, 2020.

\bibitem{onishchik2012lie}
Arkadij~L Onishchik and Ernest~B Vinberg.
\newblock {\em Lie groups and algebraic groups}.
\newblock Springer Science \& Business Media, 2012.

\bibitem{chatgpt}
OpenAI.
\newblock Introducing chatgpt.
\newblock {\em OpenAI Blog}, 2022.

\bibitem{gpt4}
OpenAI.
\newblock {GPT-4} technical report.
\newblock {\em CoRR}, 2023.

\bibitem{sora2024}
OpenAI.
\newblock Video generation models as world simulators.
\newblock {\em OpenAI Technical Report}, 2024.

\bibitem{qi2017pointnet}
Charles~R Qi, Hao Su, Kaichun Mo, and Leonidas~J Guibas.
\newblock Pointnet: Deep learning on point sets for 3d classification and segmentation.
\newblock In {\em CVPR}, 2017.

\bibitem{qi2017pointnet++}
Charles~Ruizhongtai Qi, Li Yi, Hao Su, and Leonidas~J Guibas.
\newblock Pointnet++: Deep hierarchical feature learning on point sets in a metric space.
\newblock {\em NeurIPS}, 2017.

\bibitem{qin2020s4g}
Yuzhe Qin, Rui Chen, Hao Zhu, Meng Song, Jing Xu, and Hao Su.
\newblock S4g: Amodal single-view single-shot se (3) grasp detection in cluttered scenes.
\newblock In {\em CoRL}, 2020.

\bibitem{radford2021learning}
Alec Radford, Jong~Wook Kim, Chris Hallacy, Aditya Ramesh, Gabriel Goh, Sandhini Agarwal, Girish Sastry, Amanda Askell, Pamela Mishkin, Jack Clark, et~al.
\newblock Learning transferable visual models from natural language supervision.
\newblock In {\em ICML}, 2021.

\bibitem{rashid2023language}
Adam Rashid, Satvik Sharma, Chung~Min Kim, Justin Kerr, Lawrence~Yunliang Chen, Angjoo Kanazawa, and Ken Goldberg.
\newblock Language embedded radiance fields for zero-shot task-oriented grasping.
\newblock In {\em CoRL}, 2023.

\bibitem{redmon2015real}
Joseph Redmon and Anelia Angelova.
\newblock Real-time grasp detection using convolutional neural networks.
\newblock In {\em ICRA}, 2015.

\bibitem{ren2023leveraging}
Allen~Z Ren, Bharat Govil, Tsung-Yen Yang, Karthik~R Narasimhan, and Anirudha Majumdar.
\newblock Leveraging language for accelerated learning of tool manipulation.
\newblock In {\em CoRL}, 2023.

\bibitem{2f140}
Robotiq.
\newblock Robotiq 2f-140.
\newblock {\em 2F-85 and 2F-140 Grippers}, 2018.

\bibitem{rombach2022high}
Robin Rombach, Andreas Blattmann, Dominik Lorenz, Patrick Esser, and Bj{\"o}rn Ommer.
\newblock High-resolution image synthesis with latent diffusion models.
\newblock In {\em CVPR}, 2022.

\bibitem{samelson2012notes}
Hans Samelson.
\newblock {\em Notes on Lie algebras}.
\newblock Springer Science \& Business Media, 2012.

\bibitem{shridhar2022cliport}
Mohit Shridhar, Lucas Manuelli, and Dieter Fox.
\newblock Cliport: What and where pathways for robotic manipulation.
\newblock In {\em CoRL}, 2022.

\bibitem{sohl2015deep}
Jascha Sohl-Dickstein, Eric Weiss, Niru Maheswaranathan, and Surya Ganguli.
\newblock Deep unsupervised learning using nonequilibrium thermodynamics.
\newblock In {\em ICML}, 2015.

\bibitem{song2020denoising}
Jiaming Song, Chenlin Meng, and Stefano Ermon.
\newblock Denoising diffusion implicit models.
\newblock In {\em ICLR}, 2021.

\bibitem{song2023consistency}
Yang Song, Prafulla Dhariwal, Mark Chen, and Ilya Sutskever.
\newblock Consistency models.
\newblock In {\em ICML}, 2023.

\bibitem{song2021maximum}
Yang Song, Conor Durkan, Iain Murray, and Stefano Ermon.
\newblock Maximum likelihood training of score-based diffusion models.
\newblock {\em NeurIPS}, 2021.

\bibitem{sundermeyer2021contact}
Martin Sundermeyer, Arsalan Mousavian, Rudolph Triebel, and Dieter Fox.
\newblock Contact-graspnet: Efficient 6-dof grasp generation in cluttered scenes.
\newblock In {\em ICRA}, 2021.

\bibitem{tang2023graspgpt}
Chao Tang, Dehao Huang, Wenqi Ge, Weiyu Liu, and Hong Zhang.
\newblock Graspgpt: Leveraging semantic knowledge from a large language model for task-oriented grasping.
\newblock {\em RA-L}, 2023.

\bibitem{tang2023ttask}
Chao Tang, Dehao Huang, Lingxiao Meng, Weiyu Liu, and Hong Zhang.
\newblock Task-oriented grasp prediction with visual-language inputs.
\newblock {\em IROS}, 2023.

\bibitem{tevet2022human}
Guy Tevet, Sigal Raab, Brian Gordon, Yonatan Shafir, Daniel Cohen-Or, and Amit~H Bermano.
\newblock Human motion diffusion model.
\newblock In {\em ICLR}, 2022.

\bibitem{tziafas2023language}
Georgios Tziafas, XU Yucheng, Arushi Goel, Mohammadreza Kasaei, Zhibin Li, and Hamidreza Kasaei.
\newblock Language-guided robot grasping: Clip-based referring grasp synthesis in clutter.
\newblock In {\em CoRL}, 2023.

\bibitem{urain2023se}
Julen Urain, Niklas Funk, Jan Peters, and Georgia Chalvatzaki.
\newblock Se (3)-diffusionfields: Learning smooth cost functions for joint grasp and motion optimization through diffusion.
\newblock In {\em ICRA}, 2023.

\bibitem{van2008visualizing}
Laurens Van~der Maaten and Geoffrey Hinton.
\newblock Visualizing data using t-sne.
\newblock {\em JMLR}, 2008.

\bibitem{van2023open}
Tuan Van~Vo, Minh~Nhat Vu, Baoru Huang, Toan Nguyen, Ngan Le, Thieu Vo, and Anh Nguyen.
\newblock Open-vocabulary affordance detection using knowledge distillation and text-point correlation.
\newblock In {\em ICRA}, 2024.

\bibitem{vu2023machine}
Minh~Nhat Vu, Florian Beck, Michael Schwegel, Christian Hartl-Nesic, Anh Nguyen, and Andreas Kugi.
\newblock Machine learning-based framework for optimally solving the analytical inverse kinematics for redundant manipulators.
\newblock {\em Mechatronics}, 2023.

\bibitem{vuong2024language}
An~Dinh Vuong, Minh~Nhat Vu, Baoru Huang, Nghia Nguyen, Hieu Le, Thieu Vo, and Anh Nguyen.
\newblock Language-driven grasp detection.
\newblock In {\em CVPR}, 2024.

\bibitem{vuong2023grasp}
An~Dinh Vuong, Minh~Nhat Vu, Hieu Le, Baoru Huang, Binh Huynh, Thieu Vo, Andreas Kugi, and Anh Nguyen.
\newblock Grasp-anything: Large-scale grasp dataset from foundation models.
\newblock In {\em ICRA}, 2024.

\bibitem{vuong2023language}
An~Dinh Vuong, Minh~Nhat Vu, Toan Nguyen, Baoru Huang, Dzung Nguyen, Thieu Vo, and Anh Nguyen.
\newblock Language-driven scene synthesis using multi-conditional diffusion model.
\newblock {\em NeurIPS}, 2024.

\bibitem{Wang_2021_ICCV}
Chenxi Wang, Hao-Shu Fang, Minghao Gou, Hongjie Fang, Jin Gao, and Cewu Lu.
\newblock Graspness discovery in clutters for fast and accurate grasp detection.
\newblock In {\em ICCV}, 2021.

\bibitem{wang2023diffusebot}
Tsun-Hsuan~Johnson Wang, Juntian Zheng, Pingchuan Ma, Yilun Du, Byungchul Kim, Andrew Spielberg, Josh Tenenbaum, Chuang Gan, and Daniela Rus.
\newblock Diffusebot: Breeding soft robots with physics-augmented generative diffusion models.
\newblock {\em NeurIPS}, 2024.

\bibitem{wang2023fg}
Yin Wang, Zhiying Leng, Frederick~WB Li, Shun-Cheng Wu, and Xiaohui Liang.
\newblock Fg-t2m: Fine-grained text-driven human motion generation via diffusion model.
\newblock In {\em ICCV}, 2023.

\bibitem{wang2024patch}
Zhendong Wang, Yifan Jiang, Huangjie Zheng, Peihao Wang, Pengcheng He, Zhangyang Wang, Weizhu Chen, Mingyuan Zhou, et~al.
\newblock Patch diffusion: Faster and more data-efficient training of diffusion models.
\newblock {\em NeurIPS}, 2024.

\bibitem{wu2023tune}
Jay~Zhangjie Wu, Yixiao Ge, Xintao Wang, Stan~Weixian Lei, Yuchao Gu, Yufei Shi, Wynne Hsu, Ying Shan, Xiaohu Qie, and Mike~Zheng Shou.
\newblock Tune-a-video: One-shot tuning of image diffusion models for text-to-video generation.
\newblock In {\em ICCV}, 2023.

\bibitem{xian2023unifying}
Zhou Xian, Nikolaos Gkanatsios, Theophile Gervet, and Katerina Fragkiadaki.
\newblock Unifying diffusion models with action detection transformers for multi-task robotic manipulation.
\newblock In {\em CoRL}, 2023.

\bibitem{xie2023smartbrush}
Shaoan Xie, Zhifei Zhang, Zhe Lin, Tobias Hinz, and Kun Zhang.
\newblock Smartbrush: Text and shape guided object inpainting with diffusion model.
\newblock In {\em CVPR}, 2023.

\bibitem{xu2023joint}
Kechun Xu, Shuqi Zhao, Zhongxiang Zhou, Zizhang Li, Huaijin Pi, Yifeng Zhu, Yue Wang, and Rong Xiong.
\newblock A joint modeling of vision-language-action for target-oriented grasping in clutter.
\newblock In {\em ICRA}, 2023.

\bibitem{xu2023xskill}
Mengda Xu, Zhenjia Xu, Cheng Chi, Manuela Veloso, and Shuran Song.
\newblock Xskill: Cross embodiment skill discovery.
\newblock In {\em CoRL}, 2023.

\bibitem{xu2024dynamics}
Xiaomeng Xu, Huy Ha, and Shuran Song.
\newblock Dynamics-guided diffusion model for robot manipulator design.
\newblock {\em arXiv preprint arXiv:2402.15038}, 2024.

\bibitem{ze2023gnfactor}
Yanjie Ze, Ge Yan, Yueh-Hua Wu, Annabella Macaluso, Yuying Ge, Jianglong Ye, Nicklas Hansen, Li~Erran Li, and Xiaolong Wang.
\newblock Gnfactor: Multi-task real robot learning with generalizable neural feature fields.
\newblock In {\em CoRL}, 2023.

\bibitem{zhang2019roi}
Hanbo Zhang, Xuguang Lan, Site Bai, Xinwen Zhou, Zhiqiang Tian, and Nanning Zheng.
\newblock Roi-based robotic grasp detection for object overlapping scenes.
\newblock In {\em IROS}, 2019.

\bibitem{zhao2021regnet}
Binglei Zhao, Hanbo Zhang, Xuguang Lan, Haoyu Wang, Zhiqiang Tian, and Nanning Zheng.
\newblock Regnet: Region-based grasp network for end-to-end grasp detection in point clouds.
\newblock In {\em ICRA}, 2021.

\bibitem{zhao2021point}
Hengshuang Zhao, Li Jiang, Jiaya Jia, Philip~HS Torr, and Vladlen Koltun.
\newblock Point transformer.
\newblock In {\em ICCV}, 2021.

\bibitem{zitkovich2023rt}
Brianna Zitkovich, Tianhe Yu, Sichun Xu, Peng Xu, Ted Xiao, Fei Xia, Jialin Wu, Paul Wohlhart, Stefan Welker, Ayzaan Wahid, et~al.
\newblock Rt-2: Vision-language-action models transfer web knowledge to robotic control.
\newblock In {\em CoRL}, 2023.

\end{thebibliography}

\end{document}